\documentclass[conference]{IEEEtran}
\IEEEoverridecommandlockouts
\usepackage{cite}
\usepackage{amsmath,amssymb,amsfonts}
\usepackage{algorithmic}
\usepackage{graphicx}
\usepackage{textcomp}
\usepackage{xcolor}
\usepackage{bm}
\usepackage{mathtools, cuted}
\usepackage{url} % 堀之内追加
\usepackage{subcaption} % 中原追加
\def\BibTeX{{\rm B\kern-.05em{\sc i\kern-.025em b}\kern-.08em
    T\kern-.1667em\lower.7ex\hbox{E}\kern-.125emX}}
\begin{document}

\newtheorem{Theorem}{Theorem}
\newtheorem{Proposition}{Proposition}
\newtheorem{Definition}{Definition}
\newtheorem{Lemma}{Lemma}
\newtheorem{Corollary}{Corollary}
\newtheorem{Remark}{Remark}
\newtheorem{Example}{Example}
\newcommand{\argmax}{\mathop{\rm arg~max}\limits}
\newcommand{\argmin}{\mathop{\rm arg~min}\limits}

\title{A Mixture Autoregressive Image Generative Model on Quadtree Regions for Gaussian Noise Removal via Variational Bayes and Gradient Methods 
\thanks{This work was supported by JSPS KAKENHI Grant Numbers JP25K07732, JP26H02489, JP23K03863, JP23K04293, JP24H00370, JP26K17386, and JP26K06473. This paper is also a part of the outcome of research performed under a Waseda University Grant for Special Research Projects (Project number: 2026C-683).}
}

% 中央揃えでの著者情報の改行のため追記
\makeatletter
\newcommand{\linebreakand}{%
  \end{@IEEEauthorhalign}
  \hfill\mbox{}\par
  \mbox{}\hfill\begin{@IEEEauthorhalign}
}
\makeatother

\author{
\IEEEauthorblockN{Shota Saito}
\IEEEauthorblockA{%\textit{Faculty of Informatics} \\
\textit{Gunma University}\\
shota.s@gunma-u.ac.jp}
\and
\IEEEauthorblockN{Yuta Nakahara}
\IEEEauthorblockA{%\textit{Center for Data Science} \\
\textit{Waseda University}\\
y.nakahara@waseda.jp}
\and
\IEEEauthorblockN{Kohei Horinouchi}
\IEEEauthorblockA{%\textit{Center for Data Science} \\
\textit{Waseda University}\\
horinouchi@aoni.waseda.jp}
\linebreakand % 中央揃えでの著者情報の改行
\IEEEauthorblockN{Naoki Ichijo}
\IEEEauthorblockA{%\textit{Dept. of Applied Mathematics} \\
\textit{Energy Pool Japan}\\
naoki.ichijo@energy-pool.jp}
\and
\IEEEauthorblockN{Manabu Kobayashi}
\IEEEauthorblockA{%\textit{Center for Data Science} \\
\textit{Waseda University}\\
mkoba@waseda.jp}
\and
\IEEEauthorblockN{Toshiyasu Matsushima}
\IEEEauthorblockA{%\textit{Dept. of Applied Mathematics} \\
\textit{Waseda University}\\
toshimat@waseda.jp}
}

\maketitle

%The abstract should be limited to 300 words.
\begin{abstract}
This paper addresses the problem of image denoising for grayscale images. We propose a probabilistic image generative model that combines a quadtree region-partitioning model with a mixture autoregressive model, and propose a framework that reduces MAP (maximum a posteriori)-estimation-based denoising to the maximization of a variational lower bound. To maximize this lower bound, we develop an algorithm that alternately applies variational Bayes and gradient methods. We particularly demonstrate that the gradient-based update rule can be computed analytically without numerical computation or approximation. We carried out some experiments to verify that the proposed algorithm actually removes image noise and to identify directions for future improvement.
\end{abstract}

%%%%%
\section{Introduction}
%%%%%

In this study, we address the problem of image denoising.
Image denoising has been one of the central research topics in image processing from its early history to the present day.
The mathematical formulations developed in this field have served as a foundation for broad image processing tasks.
For example, in the field of image recovery, a framework known as Plug-and-Play \cite{venkatakrishnan2013plug} has been proposed, which enables us to utilize denoising techniques for various inverse problems such as inpainting and CT reconstruction.
In the field of representation learning, the denoising autoencoder \cite{vincent2008extracting} is regarded as an early example of masked self-supervised learning.
Furthermore, in the field of image generation, diffusion models \cite{ho2020denoising} achieve high-quality image generation by iteratively applying denoising procedure.

Conventional methods for image denoising can be organized along two axes: 1) the target range of probabilistic modeling within the observation and generation process of pixel values, and 2) the amount of data required for training.

Regarding the first axis, we have three categories. In category 1-a), no explicit probabilistic model is assumed. For example, Gaussian filters are included in this category. The methods in category 1-b) represent the image degradation process as a probabilistic model but introduce only a regularization term corresponding to a prior distribution over local pixel relationships for the image generation process. BM3D \cite{dabov2007image} and total variation regularization \cite{rudin1992nonlinear} are examples of this category.
Finally, in category 1-c), the entire image generation process, including both the degradation process and global structure, is represented as a probabilistic model such as variational auto-encoders \cite{kingma2014auto} and diffusion models \cite{ho2020denoising}.
The model proposed in this study also falls into category 1-c).

Organizing these methods further along the second axis, generative models based on deep neural networks\cite{kingma2014auto, ho2020denoising}, which are cited as examples of 1-c), require large-scale image datasets to learn the prior distribution. We call this category 2-a).
In contrast, methods cited as examples of 1-a) and 1-b), such as Gaussian filters, \cite{dabov2007image}, and \cite{rudin1992nonlinear}, require no prior training and work on a single noisy image alone. We call this category 2-b).
The method proposed in this study belongs to category 2-b).
That is, the proposed method is characterized by representing the entire image observation and generation process as a probabilistic model, as in 1-c), while working on a single image as in 2-b).

To achieve these characteristics, we propose a framework that reduces MAP-estimation-based denoising, similar to Plug-and-Play \cite{venkatakrishnan2013plug}, to the maximization of a variational lower bound.
Although this framework is applicable to a variety of degradation processes and probabilistic image generative models, we here adopt pixel-wise independent Gaussian noise as an example of the degradation process, and propose a novel probabilistic image generative model that combines a quadtree region-partitioning model with a mixture autoregressive model as an example of the probabilistic image generative model.
Since no efficient method for computing the variational lower bound for such a model has been proposed yet, we derive a computation method based on variational Bayes (VB).
Using this, we propose a denoising algorithm that alternately updates a provisional restored image via a gradient method and updates the variational lower bound under the current provisional restored image.
In particular, we show that the gradient-based update rule can be computed analytically, without numerical computation or approximation.

The primary contribution of this paper is to propose a new and flexible framework for image denoising, but we carried out some numerical experiments to check the behavior of the proposed method.

%%%%%
\section{Proposed Model} \label{section_proposed_model}
%%%%%

Figure \ref{fig:BasicModel} illustrates our proposed probabilistic image generative model. As shown in Fig.\ \ref{fig:BasicModel}, let the height and width of an image be $h \in \mathbb{N}$ and $w \in \mathbb{N}$, respectively. The set of possible pixel values is denoted by $\mathcal{V}$. In this study, we deal with a grayscale image, and we set $\mathcal{V} = \mathbb{R}$. In Appendix A, we show the schematic overview of the problem setup.
%In the following, we describe the prior distributions in Sections \ref{section_tree_prior}-–\ref{section_label_prior}, the image generation probabilistic model in Section \ref{section_generative_model}, the degradation process in Section \ref{section_degraded}, and the Bayes optimal decision in Section \ref{section_Bayes_decision}. 

\begin{figure}[t]
    \centering
    \includegraphics[width=\linewidth]{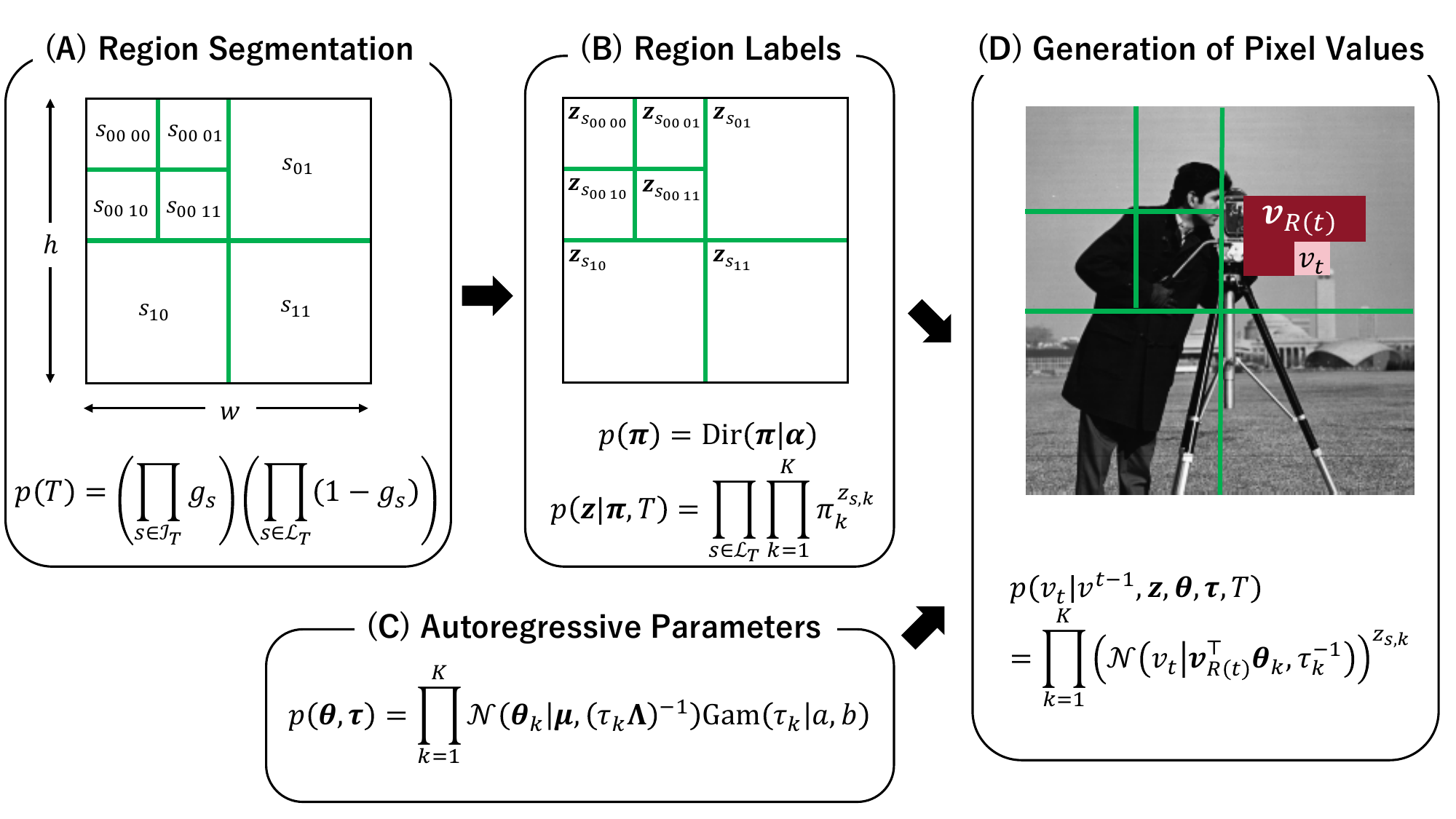}
    \caption{Proposed probabilistic image generative model}
    \label{fig:BasicModel}
\end{figure}

%%%%%
\subsection{Prior distribution of region segmentation} \label{section_tree_prior}
%%%%%

As shown in (A) in Fig.\ \ref{fig:BasicModel}, we assume that the region segmentation of an image is determined according to a prior distribution. Since a quadtree corresponds to the region segmentation, we first introduce notations for a quadtree.

Let $D_\mathrm{max} \in \mathbb{N}$ denote the maximum depth of the tree, and denote a rooted complete quadtree of depth $D_\mathrm{max}$ by $T_\mathrm{max}$. The tree $T_\mathrm{max}$ corresponds to the finest region segmentation pattern. Let the sets of all nodes, internal nodes, and leaf nodes of $T_\mathrm{max}$ be denoted by $\mathcal{S}_\mathrm{max}$, $\mathcal{I}_\mathrm{max}$, and $\mathcal{L}_\mathrm{max}$, respectively ($\mathcal{S}_\mathrm{max} = \mathcal{I}_\mathrm{max} \cup \mathcal{L}_\mathrm{max}$). Among the rooted subtrees of $T_\mathrm{max}$, the set of all subtrees that contain the root node of $T_\mathrm{max}$ is denoted by $\mathcal{T}$. 

For a quadtree $T \in \mathcal{T}$, its node set, internal node set, and leaf node set are denoted by $\mathcal{S}_T$, $\mathcal{I}_T$, and $\mathcal{L}_T$, respectively ($\mathcal{S}_T = \mathcal{I}_T \cup \mathcal{L}_T$). A segmented region of an image is represented by a leaf node $s \in \mathcal{L}_T$. Hence, when we say ``a node $s$,'' we mean that $s$ is a segmented region of an image.

We assume the following prior distribution for $T \in \mathcal{T}$:
\begin{align}
    p(T) = \left( \prod_{s \in \mathcal{I}_T} g_s \right) \left( \prod_{s \in \mathcal{L}_T} (1-g_s) \right), \label{eq_prior_T}
\end{align}
where $g_s \in [0,1]$ is a hyperparameter and we set $g_s = 0$ for $s \in \mathcal{L}_\mathrm{max}$.

%%%%%
\begin{Remark}
    The prior in \eqref{eq_prior_T} was introduced by \cite{CTW_Matsu_07}, \cite{CTW_Matsu_09}, and the properties of \eqref{eq_prior_T}, e.g., $\sum_{T \in \in \mathcal{T}} p(T) = 1$, were summarized in \cite{full_rooted_trees}. After these papers, the prior \eqref{eq_prior_T} was revisited by \cite{PK_22}, \cite{PK_24}. Regarding historical background, see \cite{SoftBCT}.
\end{Remark}
%%%%%

%%%%%
\subsection{Prior distribution of region labels} \label{section_label_prior}
%%%%%

As shown in (B) in Fig.\ \ref{fig:BasicModel}, each segmented region is assigned one of $K \in \mathbb{N}$ labels. Let $\bm z_s \coloneqq (z_{s,1}, z_{s,2}, \ldots, z_{s,K})^\top \in \{ 0, 1\} ^K$ is a one-hot vector, where $z_{s,k}$ is 1 if the label assigned to a node $s$ is $k$, and 0 otherwise. Let $\bm z \coloneqq \{ \bm z_s \}_{s \in \mathcal{L}_T}$. Moreover, let $\bm \pi \coloneqq (\pi_1, \pi_2, \ldots, \pi_K)^\top \in [0, 1]^K$ be the vector consisting of the probabilities $\pi_k$ that $z_{s,k} = 1$. That is, for each $k = 1, 2, \ldots, K$, $0 \leq \pi_k \leq 1$ and $\sum_{k=1}^K \pi_k = 1$. In this study, we assume $p(\bm \pi) = \mathrm{Dir} (\bm \pi | \bm \alpha)$ and $p(\bm z | \bm \pi, T) = \prod_{s \in \mathcal{L}_T} \prod_{k=1}^K \pi_k^{z_{s,k}}$,
where $\bm \alpha \coloneqq (\alpha_1, \alpha_2, \ldots, \alpha_K)^\top\in \mathbb{R}_{>0}^K$ is the parameter of the Dirichlet prior distribution.

%%%%%
\subsection{Prior distribution of autoregressive parameters} \label{section_parameter_prior}
%%%%%

As shown in (C) in Fig.\ \ref{fig:BasicModel}, the parameters of the autoregressive model follow a Gaussian-Gamma prior distribution: $    p(\bm \theta, \bm \tau) = \prod_{k=1}^K \mathcal{N}(\bm \theta_k | \bm \mu, (\tau_k \bm \Lambda)^{-1}) \mathrm{Gam}(\tau_k | a, b),$
where there are $K$ candidate pairs of coefficients and precision parameters for the autoregressive model, $(\bm \theta_1, \tau_1), \dots , (\bm \theta_K, \tau_K) \in \mathbb{R}^D \times \mathbb{R}_{>0}$, and let $\bm \theta \coloneqq \{ \bm \theta_k \}_{k=1}^K$ and $\bm \tau \coloneqq \{ \tau_k \}_{k=1}^K$. Furthermore, $D \in \mathbb{N}$ is the dimension of the coefficients of the autoregressive model, $\bm \mu \in \mathbb{R}^D$ is the mean vector of the Gaussian prior distribution, $\bm \Lambda \in \mathbb{R}^{D \times D}$ is the precision matrix (positive definite symmetric matrix) of the Gaussian distribution, and $a, b \in \mathbb{R}_{>0}$ are the parameters of the Gamma prior distribution.

%%%%%
\subsection{Probabilistic image generative model} \label{section_generative_model}
%%%%%

As shown in (D) in Fig.\ \ref{fig:BasicModel}, we assume that pixel values are generated by an autoregressive model. Let $v_t \in \mathcal{V}$ denote the $t$-th pixel value in raster scan order, and let $v^{t-1} \in \mathcal{V}^{t-1}$ denote the sequence of pixel values before $t$ in raster scan order. Furthermore, let $s$ denote the node that contains $v_t$. Then, the pixel value $v_t$ follows the probability distribution
\begin{align*}
    p(v_t | v^{t-1}, \bm z, T, \bm \theta, \bm \tau) = \prod_{k=1}^K \mathcal{N}(v_t | \bm v_{R(t)}^\top \bm \theta_k, \tau_k^{-1})^{z_{s,k}},
\end{align*}
where ${R(t)}$ is the set of position indices of the $(D-1)$ neighboring pixels surrounding $v_t$, and $\bm v_{R(t)}$ is a vector consisting of the pixel values within $R(t)$, with the constant term 1 appended to the end. The position indices of the neighboring pixels are determined in advance, and if $R(t)$ refers to a location outside the image, the corresponding component of $\bm v_{R(t)}$ at that position is set to a constant (for example, see Appendix B). We refer to $\bm v_{R(t)}$ as the \emph{reference pixel vector} of the pixel value $v_t$.

Also, let $h_s$ and $w_s$ be the height and width of the node $s$, respectively, and let $\bm v_s \in \mathcal{V}^{h_s w_s}$ be the vector whose components are pixel values of the node $s$. Let $\bm V_s \in \mathcal{V}^{h_s w_s \times D}$ be the matrix obtained by transposing the reference pixel vectors of all pixel values contained in $s$ and arranging them in rows (see (34) in Appendix H). Then, the likelihood for the entire image $\bm v$, where $\bm v \in \mathcal{V}^{h w}$ is the vector containing all pixel values, is given by
\begin{align*}
        p(\bm v | \bm z, T, \bm \theta, \bm \tau) = \prod_{s \in \mathcal{L}_T} \prod_{k=1}^K \mathcal{N}(\bm v_s | \bm V_s \bm \theta_k, (\tau_k \bm I_s)^{-1} )^{z_{s, k}}.
\end{align*}
Here, $\bm I_s$ is the identity matrix of size $h_s w_s$.

%%%%%
\subsection{Degradation process} \label{section_degraded}
%%%%%

Hereinafter, instead of using the notation $v_t$, we denote the pixel value of the $i$-th row and $j$-th column by $v_{i,j}$. We use the same notation for $R(t)$. The original image is observed as the observed image $\bm v'$ through the degradation process $p(\bm v' | \bm v)$. We assume pixel-wise independent Gaussian noise:
\begin{align}
    p(\bm v' | \bm v) = \prod_{i=1}^h \prod_{j=1}^w \mathcal{N}(v'_{i,j}|v_{i,j},\sigma^2), \label{eq_gaussian_noise}
\end{align}
where the variance $\sigma^2 \in \mathbb{R}_{>0}$ is assumed to be known.

%%%%%
\subsection{Bayes optimal decision} \label{section_Bayes_decision}
%%%%%

Under the above setup, the goal is to restore the original image $\bm v$ when the observed image $\bm v'$ is obtained. Let $\hat{\bm v}$ denote the estimate of $\bm v$. When we assume the 0-1 loss $I \{ \hat{\bm v} \neq \bm v \}$, where $I\{\cdot \}$ denotes the indicator function, the optimal decision $\hat{\bm v}^*$ under the Bayes criterion is given by the MAP (maximum a posteriori) estimate (see, e.g., \cite{berger1985statistical}):
\begin{align}
    \hat{\bm v}^* = \mathrm{arg} \max_{\bm v} p(\bm v | \bm v')= \mathrm{arg} \max_{\bm v} \left \{ \ln p(\bm v' | \bm v) + \ln p(\bm v) \right \}. \label{eq_Bayes_optimal}
\end{align}

%%%%%
\section{Proposed Algorithm} \label{section_proposed_algorithm}
%%%%%

Since it is difficult to compute \eqref{eq_Bayes_optimal} directly, we consider the following lower bound for $\ln p(\bm v)$:
\begin{align}
    \ln p(\bm v) \geq \text{VL}(p(\bm v, \bm z, T, \bm \theta, \bm \tau, \bm \pi), q_{\bm v} (\bm z, T, \bm \theta, \bm \tau, \bm \pi)),  \label{eq_lower_bound}
\end{align}
where $q_{\bm v} (\bm z, T, \bm \theta, \bm \tau, \bm \pi)$ is an approximate posterior distribution of $\bm z, T, \bm \theta, \bm \tau, \bm \pi$ for a given $\bm v \in \mathcal{V}^{h w}$, and the variational lower bound $ \mathrm{VL}(p(\bm v, \bm z, T, \bm \theta, \bm \tau, \bm \pi), q_{\bm v} ( \bm z, T, \bm \theta, \bm \tau, \bm \pi))$ is 
\begin{align*}
     & \mathrm{VL}(p(\bm v, \bm z, T, \bm \theta, \bm \tau, \bm \pi), q_{\bm v} ( \bm z, T, \bm \theta, \bm \tau, \bm \pi))  \nonumber \\
     & \quad \coloneqq \mathbb{E}_{q_{\bm v} ( \bm z, T, \bm \theta, \bm \tau, \bm \pi))} \left[ \ln \frac{p(\bm v, \bm z, T, \bm \theta, \bm \tau, \bm \pi)}{q_{\bm v} ( \bm z, T, \bm \theta, \bm \tau, \bm \pi))} \right].
\end{align*}

Based on the lower bound \eqref{eq_lower_bound}, we compute an approximate value for $\hat{\bm v}^*$ using the following iterative optimization algorithm.
\begin{enumerate}
    \item[(i)] Using the gradient method, we update $\bm v$ by maximizing
    \begin{align}
        \ln p(\bm v' | \bm v) + \mathrm{VL}(p(\bm v, \bm z, T, \bm \theta, \bm \tau, \bm \pi), q_{\bm v} ( \bm z, T, \bm \theta, \bm \tau, \bm \pi)). \label{objective_function}
    \end{align}
    \item[(ii)] Given $\bm v$, we update the approximate posterior distribution $q_{\bm v} (\bm z, T, \bm \theta, \bm \tau, \bm \pi)$ using VB.
    \item[(iii)] We repeat steps (i) and (ii) until an appropriate convergence criterion is satisfied.
\end{enumerate}

%%%%%
\subsection{Update formula for $q_{\bm v} (\bm z, T, \bm \theta, \bm \tau, \bm \pi)$} \label{section_variational_Bayes}
%%%%%

First, we explain the update formula for step (ii). Hereinafter, we denote $q_{\bm v}(\cdot)$ as $q(\cdot)$ and assume the following factorization: $q(\bm z, T, \bm \theta, \bm \tau, \bm \pi) = q(\bm z, T) q(\bm \theta, \bm \tau, \bm \pi)$.

The VB minimizes the KL divergence between the approximate posterior and the true posterior. The KL divergence $\mathrm{KL}(q(\bm z, T, \bm \theta, \bm \tau, \bm \pi) \| p(\bm z, T, \bm \theta, \bm \tau, \bm \pi | \bm v))$ is minimized by the approximate posterior distribution $q^*$ that satisfies the following (see, e.g., \cite{bishop}):
\begin{align}
& \hspace{-2mm} \ln q^* (\bm z, T) = \mathbb{E}_{q^*(\bm \theta, \bm \tau, \bm \pi)} \left[ \ln p(\bm v, \bm z, T, \bm \theta, \bm \tau, \bm \pi) \right] + \mathrm{const.}, \label{q_star_z_T}\\
& \hspace{-2mm} \ln q^* (\bm \theta, \bm \tau, \bm \pi) = \mathbb{E}_{q^*(\bm z, T)} \left[ \ln p(\bm v, \bm z, T, \bm \theta, \bm \tau, \bm \pi) \right] + \mathrm{const.} \label{q_star_theta_tau_pi}
\end{align}

However, $q^* (\bm z, T)$ and $q^* (\bm \theta, \bm \tau, \bm \pi)$ depend on each other. Therefore, we update $q(\bm z, T)$ and $q(\bm \theta, \bm \tau, \bm \pi)$ in turn from an initial value until convergence. The specific update formulas are given as in the following propositions.

\begin{Proposition} \label{Prop_update_z_T}
The posterior distribution $q(\bm z, T)$ can be decomposed as $q(T) \prod_{s \in \mathcal{L}_T} q(\bm z_s)$, and 
\begin{align*}
   q(\bm z_s) = \prod_{k=1}^K (\pi'_{s,k})^{z_{s,k}}, q(T) = \left( \prod_{s \in \mathcal{I}_T} g'_s \right) \left( \prod_{s \in \mathcal{L}_T} (1-g'_s) \right).
\end{align*}
Here, $\pi'_{s,k} \coloneqq \rho_{s,k}/\sum_{k=1}^K \rho_{s,k}$ and
\begin{align}
    &\ln \rho_{s,k} \coloneqq \psi(\alpha'_k) - \psi \left( \textstyle \sum_{k=1}^K \alpha'_k \right) \nonumber \\
    & + \frac{h_s w_s}{2} \left( -\ln 2\pi + \psi (a'_k) - \ln b'_k \right) \nonumber \\
    &- \frac{a'_k}{2 b'_k} (\bm v_s - \bm V_s \bm \mu'_k)^\top (\bm v_s - \bm V_s \bm \mu'_k) -\frac{1}{2} \mathrm{Tr} \{ \bm V_s^\top \bm V_s (\bm \Lambda'_k)^{-1} \}, \label{eq:rho_closed_form}
\end{align}
where $\psi (\cdot)$ denotes the digamma function, $\mathrm{Tr} \{\cdot\}$ denotes the trace of a matrix, $\alpha'_k$, $\bm \Lambda'_k$, $\bm \mu'_k$, $a'_k$, and $b'_k$ are given by \eqref{def_alpha_prime}, \eqref{def_lambda_prime}, \eqref{def_mu_prime}, \eqref{def_a_prime}, and \eqref{def_b_prime}, respectively, and
\begin{align}
    g'_s &\coloneqq 
    \begin{cases}
        \frac{g_s \prod_{s_\mathrm{ch} \in \mathrm{Ch}(s)} \phi_{s_\mathrm{ch}}}{\phi_s}, & s \in \mathcal{I}_\mathrm{max}, \\
        0, & s \in \mathcal{L}_\mathrm{max},
    \end{cases} \label{def_g_prime} \\
    \phi_s &\coloneqq \nonumber \\
    & \begin{cases}
        (1-g_s) \sum_{k=1}^K \rho_{s,k} + g_s \prod_{s_\mathrm{ch} \in \mathrm{Ch}(s)} \phi_{s_\mathrm{ch}}, & s \in \mathcal{I}_\mathrm{max}, \\
        \sum_{k=1}^K \rho_{s,k}, & s \in \mathcal{L}_\mathrm{max},
    \end{cases} \nonumber
\end{align}
where $\mathrm{Ch}(s)$ denotes the set of child nodes of a node $s$.
\end{Proposition}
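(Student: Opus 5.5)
Starting from \eqref{q_star_z_T}, I will write out $\ln p(\bm v, \bm z, T, \bm \theta, \bm \tau, \bm \pi) = \ln p(T) + \ln p(\bm z | \bm \pi, T) + \ln p(\bm \pi) + \ln p(\bm \theta, \bm \tau) + \ln p(\bm v | \bm z, T, \bm \theta, \bm \tau)$ and discard the terms that do not depend on $(\bm z, T)$. What remains is
\begin{align*}
\ln q^*(\bm z, T) = \ln p(T) + \sum_{s \in \mathcal{L}_T} \sum_{k=1}^K z_{s,k} \ln \rho_{s,k} + \mathrm{const.},
\end{align*}
where $\ln \rho_{s,k} \coloneqq \mathbb{E}[\ln \pi_k] + \mathbb{E}[\ln \mathcal{N}(\bm v_s | \bm V_s \bm \theta_k, (\tau_k \bm I_s)^{-1})]$. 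To make $\rho_{s,k}$ explicit I take $q(\bm \theta,\bm \tau,\bm \pi)$ in the conjugate form fixed by \eqref{q_star_theta_tau_pi}, namely $\mathrm{Dir}(\bm \pi|\bm \alpha')\prod_k \mathcal{N}(\bm \theta_k|\bm \mu'_k,(\tau_k\bm \Lambda'_k)^{-1})\mathrm{Gam}(\tau_k|a'_k,b'_k)$, and use the standard expectations:
\begin{itemize}
\item $\mathbb{E}[\ln \pi_k]=\psi(\alpha'_k)-\psi(\sum_k\alpha'_k)$;
\item $\mathbb{E}[\ln\tau_k]=\psi(a'_k)-\ln b'_k$;
\item $\mathbb{E}[\tau_k\|\bm v_s-\bm V_s\bm \theta_k\|^2]=\frac{a'_k}{b'_k}\|\bm v_s-\bm V_s\bm \mu'_k\|^2+\mathrm{Tr}\{\bm V_s^\top\bm V_s(\bm \Lambda'_k)^{-1}\}$.
\end{itemize}
The last identity comes from conditioning on $\tau_k$: the conditional covariance is $(\tau_k\bm\Lambda'_k)^{-1}$, so the factor $\tau_k$ cancels in the trace term. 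Putting these together reproduces \eqref{eq:rho_closed_form}.

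Next I show the factorization. Define $\tilde\rho_{s,k}=\rho_{s,k}$ and note that
\begin{align*}
q^*(\bm z,T)\propto p(T)\prod_{s\in\mathcal{L}_T}\prod_k\rho_{s,k}^{z_{s,k}}.
\end{align*}
Summing over $\bm z$ for fixed $T$ gives $q(T)\propto p(T)\prod_{s\in\mathcal{L}_T}\sum_k\rho_{s,k}$. Conditional on $T$, the $\bm z_s$ are then independent with $q(\bm z_s)=\prod_k(\pi'_{s,k})^{z_{s,k}}$. This yields the decomposition $q(T)\prod_s q(\bm z_s)$.

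The main step is to show that
\begin{align*}
q(T)\propto\Bigl(\prod_{s\in\mathcal{I}_T}g_s\Bigr)\prod_{s\in\mathcal{L}_T}(1-g_s)\sum_k\rho_{s,k}
\end{align*}
has the product form with the parameters $g'_s$. I will prove by induction on depth, from the leaves upward, that for each node $s$ the sum of this unnormalized weight over all subtrees rooted at $s$ equals $\phi_s$.
\begin{itemize}
\item \emph{Base case.} For $s\in\mathcal{L}_\mathrm{max}$ we have $g_s=0$, so the sum is $\sum_k\rho_{s,k}=\phi_s$.
\item \emph{Inductive step.} For internal $s$, split on whether $s$ is a leaf. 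If it is, the contribution is $(1-g_s)\sum_k\rho_{s,k}$. If it is not, the contribution is $g_s$ times a sum over independent subtrees at each child, which factorizes as $g_s\prod_{\mathrm{ch}}\phi_{s_\mathrm{ch}}$.
\end{itemize}
In particular the normalizer is $\phi_{s_\lambda}$ at the root $s_\lambda$.

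Finally, I rewrite each local weight as $g'_s\,\phi_s/\prod\phi_{\mathrm{ch}}$ for internal nodes and $(1-g'_s)\phi_s$ for leaves, using $1-g'_s=(1-g_s)\sum_k\rho_{s,k}/\phi_s$. In the product over $T$ the $\phi$ factors telescope: each non-root node's $\phi$ appears once in the numerator (at itself) and once in the denominator (at its parent). Only $\phi_{s_\lambda}$ survives, and it cancels the normalizer. The telescoping bookkeeping is the delicate point, but it is routine once the recursion for $\phi_s$ has been established.
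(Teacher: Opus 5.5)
Your proposal is correct and follows essentially the same route as the paper's proof: the same definition of $\rho_{s,k}$, the same closed-form Dirichlet/Gamma expectations with the quadratic term handled by conditioning on $\tau_k$, and the same telescoping of the $\phi_s$ factors to reach the $g'_s$ product form. The only deviation is that you prove the normalizer equals $\phi_{s_\lambda}$ by a leaves-up induction, where the paper cites Theorem 1 of the full-rooted-trees reference; this makes your argument self-contained.
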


\begin{IEEEproof}
    See Appendix C.
\end{IEEEproof}

Let $q(s \in \mathcal{L}_T) \coloneqq \mathbb{E}_{q(T)}[I \{ s \in \mathcal{L}_T \}]$. That is, $q(s \in \mathcal{L}_T)$ denotes the probability that a node $s$ is a leaf node under $q(T)$. Then, from \cite[Theorem 2]{full_rooted_trees}, it holds that
\begin{align}
q(s \in \mathcal{L}_T) = (1-g'_s) \left( \prod_{s' \prec s} g'_{s'} \right), \label{eq_expression_q}
\end{align}
where $s' \prec s$ denotes that $s'$ is an ancestor of $s$. 

\begin{Proposition} \label{Prop_update_pi_theta_tau}
The posterior distribution $q(\bm \theta, \bm \tau, \bm \pi)$ can be decomposed as $q(\bm \pi) \prod_{k=1}^K q(\bm \theta_k, \tau_k)$, and
    \begin{align*}
        q(\bm \pi) &= \mathrm{Dir}(\bm \pi | \bm \alpha'), \\
        q(\bm \theta_k, \tau_k) &= \mathcal{N}(\bm \theta_k | \bm \mu'_k, (\tau_k \bm \Lambda'_k)^{-1}) \mathrm{Gam}(\tau_k | a'_k, b'_k),
    \end{align*}
    where $\bm \alpha' \coloneqq (\alpha'_1, \ldots, \alpha'_K)^\top \in \mathbb{R}_{>0}^K$, and for $k=1,\ldots,K$,
    \begin{align}
        \alpha'_k &\coloneqq \alpha_k + \sum_{s \in \mathcal{S}_\mathrm{max}} q(s \in \mathcal{L}_T) \pi'_{s,k}, \label{def_alpha_prime} \\
        \bm \Lambda'_k &\coloneqq \bm \Lambda + \sum_{s \in \mathcal{S}_\mathrm{max}} q(s \in \mathcal{L}_T) \pi'_{s,k} \bm V_s^\top \bm V_s, \label{def_lambda_prime} \\
        \bm \mu'_k &\coloneqq \left( \bm \Lambda'_k \right)^{-1} \left( \bm \Lambda \bm \mu + \sum_{s \in \mathcal{S}_\mathrm{max}} q(s \in \mathcal{L}_T) \pi'_{s,k} \bm V_s^\top \bm v_s \right), \label{def_mu_prime} \\
        a'_k &\coloneqq a + \frac{1}{2}\sum_{s \in \mathcal{S}_\mathrm{max}} q(s \in \mathcal{L}_T) \pi'_{s,k} h_s w_s, \label{def_a_prime} \\
        b'_k &\coloneqq b + \frac{1}{2} \Bigl( \bm \mu^\top \bm \Lambda \bm \mu  \nonumber \\
        & \left. \quad + \sum_{s \in \mathcal{S}_\mathrm{max}} q(s \in \mathcal{L}_T) \pi'_{s,k} \bm v_s^\top \bm v_s - (\bm \mu'_k)^\top \bm \Lambda'_k \bm \mu'_k \right). \label{def_b_prime}
    \end{align}
\end{Proposition}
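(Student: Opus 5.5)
Starting from \eqref{q_star_theta_tau_pi}, I will write $\ln p(\bm v, \bm z, T, \bm \theta, \bm \tau, \bm \pi)$ as the sum of $\ln p(T)$, $\ln p(\bm \pi)$, $\ln p(\bm \theta, \bm \tau)$, $\ln p(\bm z | \bm \pi, T)$ and $\ln p(\bm v | \bm z, T, \bm \theta, \bm \tau)$. The term $\ln p(T)$ does not involve $(\bm \theta, \bm \tau, \bm \pi)$, so it goes into the constant.

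\textbf{Removing the dependence on the random tree.} The main obstacle is that the products over $s \in \mathcal{L}_T$ range over a random index set. I handle this by rewriting each such product with indicators over the fixed set $\mathcal{S}_\mathrm{max}$:
\begin{align*}
\ln p(\bm z | \bm \pi, T) = \sum_{s \in \mathcal{S}_\mathrm{max}} I\{s \in \mathcal{L}_T\} \sum_{k=1}^K z_{s,k} \ln \pi_k,
\end{align*}
and analogously for $\ln p(\bm v | \bm z, T, \bm \theta, \bm \tau)$. I then take the expectation under $q(\bm z, T) = q(T)\prod_{s \in \mathcal{L}_T} q(\bm z_s)$ from Proposition \ref{Prop_update_z_T}. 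I need $\mathbb{E}_{q}[I\{s \in \mathcal{L}_T\} z_{s,k}] = q(s \in \mathcal{L}_T)\pi'_{s,k}$. This holds because, conditionally on $s$ being a leaf, $q(\bm z_s)$ is the categorical distribution with parameter $\pi'_{s,\cdot}$, which does not depend on $T$. Care is needed here: $\bm z_s$ is only defined when $s \in \mathcal{L}_T$, so I will view $q(\bm z_s)$ as defined for every node of $\mathcal{S}_\mathrm{max}$, as Proposition \ref{Prop_update_z_T} implicitly does. With the weights $r_{s,k} \coloneqq q(s \in \mathcal{L}_T)\pi'_{s,k}$, given by \eqref{eq_expression_q}, this yields
\begin{align*}
\ln q^*(\bm \theta, \bm \tau, \bm \pi) &= \ln \mathrm{Dir}(\bm \pi|\bm \alpha) + \sum_{k=1}^K \Bigl(\sum_{s} r_{s,k}\Bigr)\ln \pi_k \\
&\quad + \sum_{k=1}^K \Bigl[ \ln \mathcal{N}(\bm \theta_k | \bm \mu, (\tau_k\bm\Lambda)^{-1}) + \ln \mathrm{Gam}(\tau_k|a,b) \\
&\quad + \sum_{s} r_{s,k} \ln \mathcal{N}(\bm v_s | \bm V_s\bm\theta_k, (\tau_k \bm I_s)^{-1}) \Bigr] + \mathrm{const.}
\end{align*}

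\textbf{Factorization.} The right-hand side is a sum of a function of $\bm \pi$ alone and, for each $k$, a function of $(\bm \theta_k, \tau_k)$ alone. Hence $q$ factorizes as $q(\bm \pi)\prod_{k=1}^K q(\bm \theta_k, \tau_k)$.

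\textbf{The $\bm \pi$ factor.} The $\bm \pi$ part is Dirichlet kernel times $\prod_k \pi_k^{\sum_s r_{s,k}}$. This gives $\mathrm{Dir}(\bm\pi|\bm\alpha')$ with $\alpha'_k$ as in \eqref{def_alpha_prime}.

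\textbf{The $(\bm\theta_k, \tau_k)$ factors.} These follow from the standard conjugate Gaussian--Gamma computation for Bayesian linear regression (see, e.g., \cite{bishop}), using weighted sufficient statistics. I collect the terms in $\bm\theta_k$:
\begin{align*}
-\frac{\tau_k}{2}\Bigl[(\bm\theta_k-\bm\mu)^\top\bm\Lambda(\bm\theta_k-\bm\mu) + \sum_s r_{s,k}\|\bm v_s-\bm V_s\bm\theta_k\|^2\Bigr].
\end{align*}
Completing the square gives precision $\tau_k\bm\Lambda'_k$ and mean $\bm\mu'_k$ as in \eqref{def_lambda_prime} and \eqref{def_mu_prime}. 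The leftover $\bm\theta_k$-free quadratic is
\begin{align*}
\bm\mu^\top\bm\Lambda\bm\mu + \sum_s r_{s,k}\bm v_s^\top\bm v_s - (\bm\mu'_k)^\top\bm\Lambda'_k\bm\mu'_k,
\end{align*}
and it produces $b'_k$ in \eqref{def_b_prime}.

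\textbf{Bookkeeping of the $\ln\tau_k$ powers.} The Gaussian normalizer of $\bm\theta_k$ contributes $\frac{D}{2}\ln\tau_k$, and this must be kept with the $\bm\theta_k$ factor. The prior Gamma density contributes $(a-1)\ln\tau_k$. The likelihood terms contribute $\sum_s r_{s,k}\frac{h_s w_s}{2}\ln\tau_k$. Altogether this gives $\mathrm{Gam}(\tau_k|a'_k,b'_k)$ with $a'_k$ as in \eqref{def_a_prime}.
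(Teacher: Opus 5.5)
Your proposal is correct and follows essentially the same route as the paper's proof in Appendix D. Both arguments rewrite the sums over $\mathcal{L}_T$ as indicator-weighted sums over $\mathcal{S}_\mathrm{max}$ to obtain the weights $q(s \in \mathcal{L}_T)\pi'_{s,k}$, read off the Dirichlet update, and complete the square in $\bm\theta_k$ with the $\frac{D}{2}\ln\tau_k$ term absorbed into the conditional Gaussian normalizer so that the remaining $\tau_k$ terms give $a'_k$ and $b'_k$; your explicit justification of $\mathbb{E}_q[I\{s \in \mathcal{L}_T\} z_{s,k}] = q(s \in \mathcal{L}_T)\pi'_{s,k}$ is simply a more careful version of the step the paper performs implicitly by first averaging over $\bm z$ given $T$.
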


\begin{IEEEproof}
    See Appendix D.
\end{IEEEproof}

%%%%%
\subsection{Update formula for $\bm v$} \label{section_derivative}
%%%%%

We explain the update formula for step (i), i.e., the calculation for updating $\bm v$ using the gradient method to maximize \eqref{objective_function}. Specifically, let \eqref{objective_function} (the objective function) be denoted by $f(\bm v)$, and we explain the calculation of $\frac{\partial f(\bm v)}{\partial v_{i,j}}$. 

First, we introduce some notations.
\begin{enumerate}
\item As described in Section \ref{section_proposed_model}, $\bm v_{R(i,j)}$ denotes the \emph{reference pixel vector} of $v_{i,j}$. For example, $\bm v_{R(i,j)}$ for $D=4$ is 
$
    \bm v_{R(i,j)} \coloneqq  (v_{i, j-1}, v_{i-1, j+1}, v_{i-1, j}, 1)^\top
$
(see Fig.\ \ref{fig:region}).
\item For each position index in $R(i,j)$, let $\tilde{R}(i,j)$ denote the set of position indices that are point-symmetric with respect to $(i,j)$, i.e., the set of $(D-1)$ position indices of pixels that refer to $v_{i,j}$ when the pixels are examined in raster scan order. 
\item Let $\bm v_{s, \tilde{R}(i,j)}$ denote the $D$-dimensional vector obtained by concatenating the pixel values at the positions in $\tilde{R}(i,j)$ among the $\bm v_s$ and appending a zero at the end; this is called the \emph{reverse reference pixel vector} for the pixel value $v_{i,j}$ at a node $s$. For example, if $v_{i,j}$ is located sufficiently far from the edge of a node $s$ and $D=4$, 
\begin{align}
    \bm v_{s, \tilde{R}(i,j)} \coloneqq ( v_{i, j+1}, v_{i+1, j-1}, v_{i+1, j}, 0)^\top \label{eq_referenced_vector}
\end{align}
(see Fig.\ \ref{fig:region}), where the corresponding component of $\bm v_{s, \tilde{R}(i,j)}$ is set to 0 if $\tilde{R}(i,j)$ refers to a region outside a node $s$. 
\item The $D \times D$ matrix $\bm V_{s,\tilde{R}(i,j)}$ is defined by extracting the rows from $\bm V_s$ that correspond to the positions in $\tilde{R}(i,j)$, and appending the $D$-dimensional zero vector $\bm 0^\top$ to the last row. When $D=4$, 
\begin{align}
    \bm V_{s,\tilde{R}(i,j)} & \coloneqq 
    \begin{pmatrix}
        \bm v_{R(i, j+1)}^\top \\ \bm v_{R(i+1,j-1)}^\top \\ \bm v_{R(i+1,j)}^\top \\ \bm 0^\top
    \end{pmatrix},
\end{align}
where, if $\tilde{R}(i,j)$ refers to the exterior of a node $s$, the row vector at the corresponding position of $\bm V_{s, \tilde{R}(i,j)}$ is set to $\bm 0^\top$.
\item Let $\mathrm{path}(v_{i,j})$ denote the set of nodes $s$ on the path of $T_{\mathrm{max}}$ that contain $v_{i,j}$ (an example is in Fig.\ \ref{fig:example_path}).
\end{enumerate}

\begin{figure}[t]
    \centering
    \includegraphics[width=0.6\linewidth]{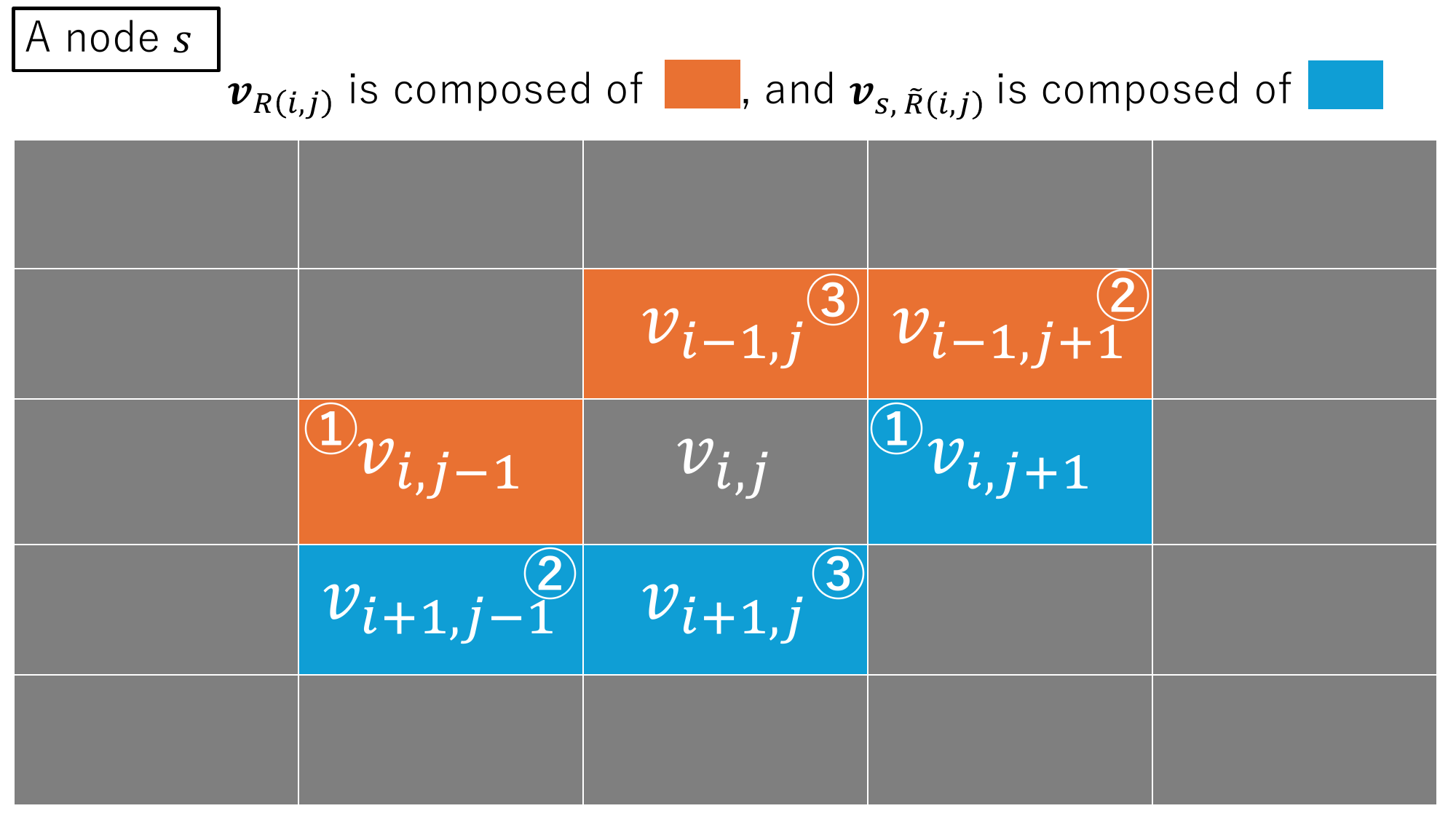}
    \caption{$\bm v_{R(i,j)}$ and $\bm v_{s,\tilde{R}(i,j)}$ when $D=4$}
    \label{fig:region}
\end{figure}

\begin{figure}[t]
    \centering
    \includegraphics[width=0.6\linewidth]{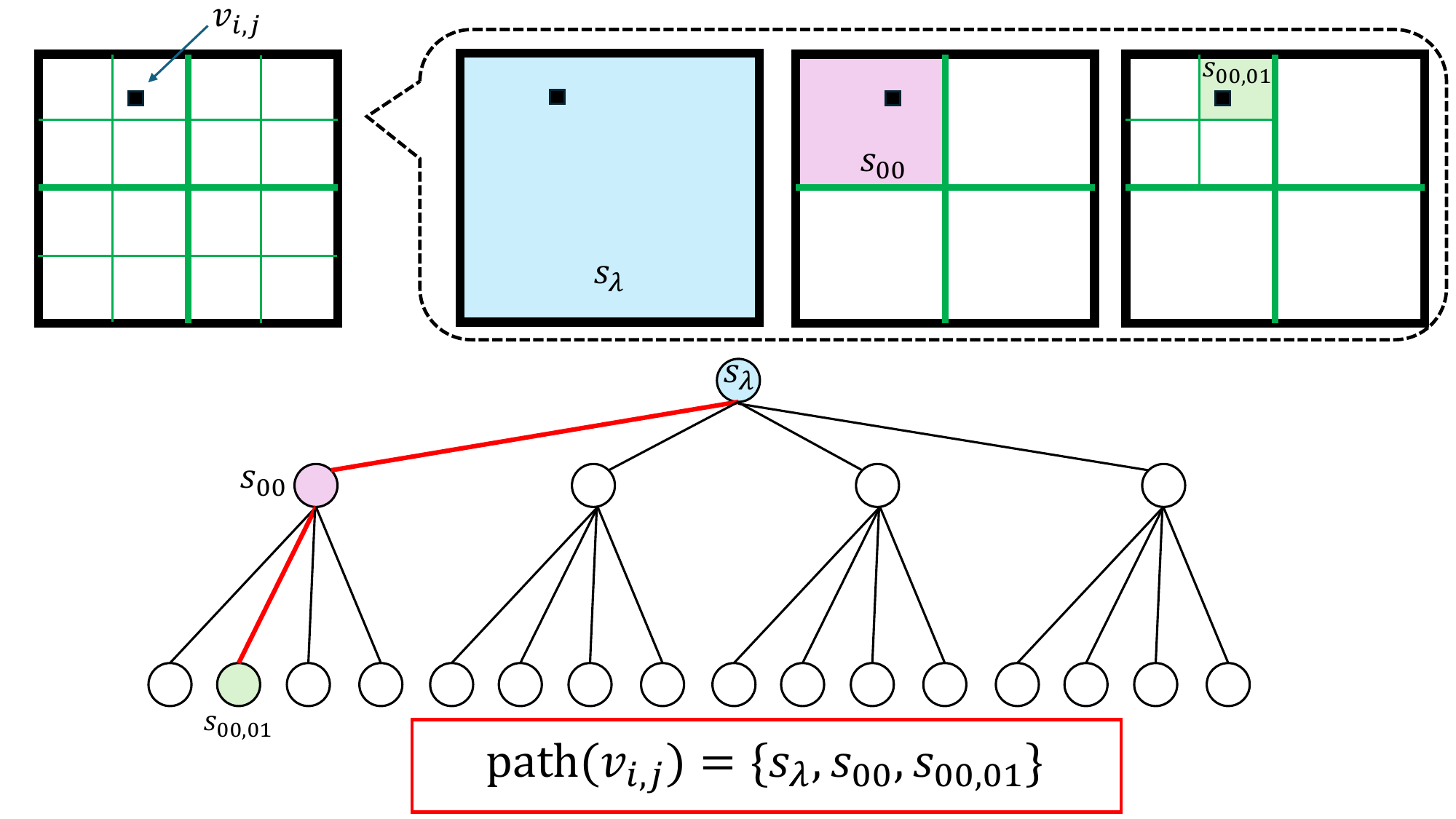}
    \caption{An example of $\mathrm{path}(v_{i,j})$ for $D_{\mathrm{max}}=2$}
    \label{fig:example_path}
\end{figure}

%The following proposition shows that $\frac{\partial f(\bm v)}{\partial v_{i,j}}$ is expressed by using the pixel value $v'_{i,j}$ of the observed image,  variance $\sigma^2$ of the Gaussian noise in the degradation process, parameters of the approximate posterior distribution $g'_s, \pi'_{s,k}, a'_k, b'_k, \bm \mu'_k, \bm \Lambda'_k$, reference pixel vector $\bm v_{R(i,j)}$, reverse reference pixel vector $\bm v_{s,\tilde{R}(i,j)}$, and matrix $\bm V_{s,\tilde{R}(i,j)}$. 

The derivative $\frac{\partial f(\bm v)}{\partial v_{i,j}}$ is expressed as follows.

\begin{Proposition} \label{Prop_update_v}
It holds that
\begin{align*}
\frac{\partial f(\bm v)}{\partial v_{i,j}} 
&= \frac{v'_{i,j} - v_{i,j}}{\sigma^2} + \sum_{s \in \mathrm{path}(v_{i,j})} (1-g'_s) \left(\prod_{s' \prec s} g'_{s'} \right) \nonumber \\
& \times \sum_{k=1}^K \pi'_{s,k} \biggl(- \frac{a'_k}{b'_k} \left \{ v_{i,j} - \left(\bm v_{R(i,j)} + \bm v_{s,\tilde{R}(i,j)} \right)^\top \bm \mu'_k \right. \nonumber \\
& \left. ~ + (\bm \mu'_k)^\top \bm V_{s,\tilde{R}(i,j)} \bm \mu'_k \right \} - \mathrm{Tr} \left \{ \bm V_{s,\tilde{R}(i,j)} (\bm \Lambda'_k)^{-1} \right \} \biggr).
\end{align*}
\end{Proposition}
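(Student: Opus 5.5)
We treat the approximate posterior $q$ as fixed while differentiating. In step (i) the parameters $g'_s,\pi'_{s,k},\bm\alpha',\bm\mu'_k,\bm\Lambda'_k,a'_k,b'_k$ are those produced by the preceding VB step (ii), so only the explicit dependence of $f$ on $\bm v$ matters. The term $\ln p(\bm v'|\bm v)$ from \eqref{eq_gaussian_noise} contributes $(v'_{i,j}-v_{i,j})/\sigma^2$ directly. In the variational lower bound, the entropy of $q$ and the factors $p(T)$, $p(\bm z|\bm\pi,T)$, $p(\bm\pi)$, $p(\bm\theta,\bm\tau)$ do not depend on $\bm v$. Only $\mathbb{E}_q[\ln p(\bm v|\bm z,T,\bm\theta,\bm\tau)]$ remains.

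Using the factorization of Propositions \ref{Prop_update_z_T} and \ref{Prop_update_pi_theta_tau}, we rewrite the sum over $s\in\mathcal{L}_T$ as a sum over $s\in\mathcal{S}_\mathrm{max}$ weighted by $I\{s\in\mathcal{L}_T\}$. Taking expectations then gives
\begin{align*}
\sum_{s\in\mathcal{S}_\mathrm{max}} q(s\in\mathcal{L}_T)\sum_{k=1}^K\pi'_{s,k}\,\mathbb{E}_{q(\bm\theta_k,\tau_k)}\bigl[\ln\mathcal{N}(\bm v_s|\bm V_s\bm\theta_k,(\tau_k\bm I_s)^{-1})\bigr].
\end{align*}
The inner expectation is exactly the $\bm v$-dependent part of \eqref{eq:rho_closed_form}:
\begin{align*}
-\tfrac{a'_k}{2b'_k}\|\bm v_s-\bm V_s\bm\mu'_k\|^2-\tfrac12\mathrm{Tr}\{\bm V_s^\top\bm V_s(\bm\Lambda'_k)^{-1}\}+\mathrm{const}.
\end{align*}
The weight $q(s\in\mathcal{L}_T)$ is replaced by \eqref{eq_expression_q}. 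A node $s$ contributes a nonzero derivative only if $v_{i,j}$ lies in $s$, i.e.\ $s\in\mathrm{path}(v_{i,j})$. Here we use the convention implicit in the definitions of $\bm v_{s,\tilde R(i,j)}$ and $\bm V_{s,\tilde R(i,j)}$: references leaving node $s$ are not differentiated through. This gives the outer sum in the statement.

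Next we differentiate the quadratic term for a fixed $s$ and $k$. Let $\bm r=\bm v_s-\bm V_s\bm\mu'_k$. The variable $v_{i,j}$ enters $\bm v_s$ at its own position. It also enters $\bm V_s$ in the rows of the pixels $t_1,\dots,t_{D-1}$ at positions $\tilde R(i,j)$. We order $\tilde R(i,j)$ by point symmetry so that $v_{i,j}$ is precisely the $l$-th component of $\bm v_{R(t_l)}$. Hence $\partial\bm V_s/\partial v_{i,j}$ has a single $1$ in entry $(t_l,l)$ for each $l$, and $\partial\bm r/\partial v_{i,j}=\bm e_{(i,j)}-\sum_l\mu'_{k,l}\bm e_{t_l}$. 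Then
\begin{align*}
\tfrac12\,\partial\|\bm r\|^2/\partial v_{i,j}=r_{(i,j)}-\sum_l\mu'_{k,l}\,r_{t_l},
\end{align*}
where $r_{(i,j)}=v_{i,j}-\bm v_{R(i,j)}^\top\bm\mu'_k$ and $\sum_l\mu'_{k,l}r_{t_l}=\bm v_{s,\tilde R(i,j)}^\top\bm\mu'_k-(\bm\mu'_k)^\top\bm V_{s,\tilde R(i,j)}\bm\mu'_k$. This is the braced expression in the statement. The appended zero entry and zero row handle the constant term, and out-of-node entries vanish by construction.

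For the trace term, $\partial(\bm V_s^\top\bm V_s)=(\partial\bm V_s)^\top\bm V_s+\bm V_s^\top\partial\bm V_s$, and $\bm\Lambda'_k$ is symmetric. So the derivative of $\tfrac12\mathrm{Tr}\{\bm V_s^\top\bm V_s(\bm\Lambda'_k)^{-1}\}$ equals $\mathrm{Tr}\{(\partial\bm V_s)^\top\bm V_s(\bm\Lambda'_k)^{-1}\}=\sum_l\bigl(\bm v_{R(t_l)}^\top(\bm\Lambda'_k)^{-1}\bigr)_l=\mathrm{Tr}\{\bm V_{s,\tilde R(i,j)}(\bm\Lambda'_k)^{-1}\}$.

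The main obstacle is this index bookkeeping: the $l$-th row of $\bm V_{s,\tilde R(i,j)}$ must reference $v_{i,j}$ exactly in its $l$-th coordinate, with correct treatment at image and node boundaries. I would verify it first for the concrete $D=4$ case in \eqref{eq_referenced_vector} and then state it in general.
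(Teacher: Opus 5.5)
Your proposal is correct and follows essentially the same route as the paper's proof: freeze $q$, reduce $\mathrm{VL}$ to $\sum_{s\in\mathcal{S}_\mathrm{max}} q(s\in\mathcal{L}_T)\sum_k \pi'_{s,k}\,\mathbb{E}_{q(\bm\theta_k,\tau_k)}[\ln\mathcal{N}(\bm v_s|\bm V_s\bm\theta_k,(\tau_k\bm I_s)^{-1})]$ plus constants, restrict to $s\in\mathrm{path}(v_{i,j})$, and differentiate the quadratic and trace terms. The only difference is cosmetic: you differentiate the residual through the sparse matrix $\partial\bm V_s/\partial v_{i,j}=\sum_l \bm e_{t_l}\bm e_l^\top$, while the paper expands $\|\bm v_s-\bm V_s\bm\mu'_k\|^2$ into three terms and differentiates each, and your version makes the slot-$l$/pixel-$t_l$ bookkeeping more explicit. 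The convention you flag---treating as constant the rows of $\bm V_{s''}$ that reference $v_{i,j}$ from nodes $s''\notin\mathrm{path}(v_{i,j})$, which would otherwise contribute under the model as literally stated, since only out-of-image references are replaced by constants---is exactly the restriction the paper makes silently when it writes $\partial\mathrm{VL}/\partial v_{i,j}$ as a sum over $\mathrm{path}(v_{i,j})$.
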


\begin{IEEEproof}
    See Appendix H.
\end{IEEEproof}

%%%%%
\section{Numerical Experiments} \label{section_experiments}
%%%%%

% \subsection{Objectives}

The objectives of these experiments are twofold: to verify that the proposed framework actually removes image noise, and to identify directions for future improvement.

% \subsection{Experimental Setup}

\begin{figure*}
    \centering
    \begin{subfigure}[b]{0.11\linewidth}
        \includegraphics[clip, viewport=0 16 128 144, width=\linewidth]{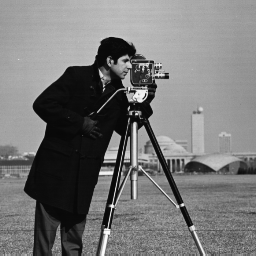}
        \caption{Clean}
    \end{subfigure}
    \hfill
    \begin{subfigure}[b]{0.11\linewidth}
        \includegraphics[clip, viewport=0 16 128 144, width=\linewidth]{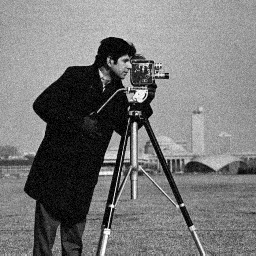}
        \caption{Noised}
    \end{subfigure}
    \hfill
    \begin{subfigure}[b]{0.11\linewidth}
        \includegraphics[clip, viewport=0 16 128 144, width=\linewidth]{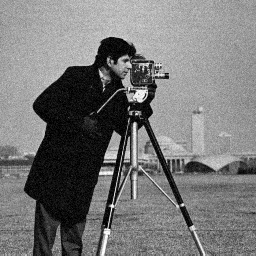}
        \caption{GF}
    \end{subfigure}
    \hfill
    \begin{subfigure}[b]{0.11\linewidth}
        \includegraphics[clip, viewport=0 16 128 144, width=\linewidth]{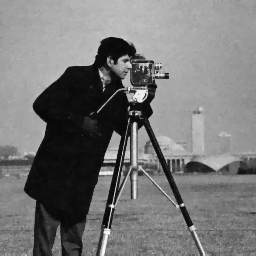}
        \caption{TV}
    \end{subfigure}
    \hfill
    \begin{subfigure}[b]{0.11\linewidth}
        \includegraphics[clip, viewport=0 16 128 144, width=\linewidth]{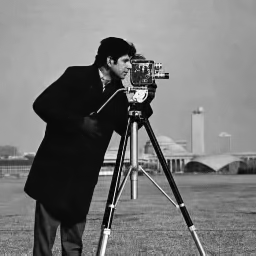}
        \caption{BM3D}
    \end{subfigure}
    \hfill
    \begin{subfigure}[b]{0.11\linewidth}
        \includegraphics[clip, viewport=0 16 128 144, width=\linewidth]{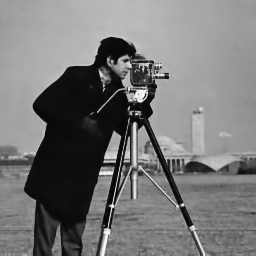}
        \caption{DIP}
    \end{subfigure}
    \hfill
    \begin{subfigure}[b]{0.11\linewidth}
        \includegraphics[clip, viewport=0 16 128 144, width=\linewidth]{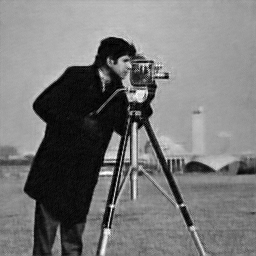}
        \caption{N2V}
    \end{subfigure}
    \hfill
    \begin{subfigure}[b]{0.11\linewidth}
        \includegraphics[clip, viewport=0 16 128 144, width=\linewidth]{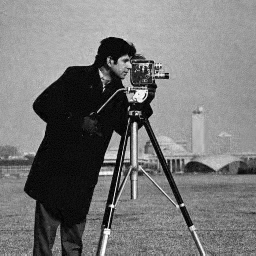}
        \caption{Proposed}
    \end{subfigure}
    \caption{A part of an image used in the experiment with $\sigma = 10$. From left to right, they show the clean image, the noisy image, the denoised images by GF, TV denoising, BM3D, DIP, N2V, and the proposed method.}
    \label{fig:visual_examples}
\end{figure*}

\noindent\textbf{Experimental setup.}
% \noindent\textbf{Dataset.}
Experiments were conducted on the Set12 benchmark (a standard grayscale image denoising benchmark used in \cite{ZhangTIP2017}\footnote{In our experiments, the clean Set12 images were taken from \url{https://github.com/cubeyoung/Noise2Score/tree/master/testdata/Set12/clean}.}). The noise levels were set to $\sigma \in \{5, 10, 15\}$. The value of $\sigma$ was treated as a known constant.

% \noindent\textbf{Baselines.}
The proposed method was compared against the following methods: Gaussian filtering (GF) implemented in scipy\cite{scipy}, total variation (TV) denoising \cite{rudin1992nonlinear} implemented in scikit-image\cite{scikit-image} as \texttt{denoise\_tv\_chambolle}, BM3D \cite{dabov2007image}, DIP \cite{Ulyanov2018DeepImagePrior}, and N2V \cite{Krull2019Noise2Void}. Note that DIP and N2V do not require $\sigma$. For GF, we used a Gaussian kernel whose standard deviations (the \texttt{sigma} option in \cite{scipy}) were $0.17$, $0.33$, and $0.50$ for the noise levels $\sigma = 5$, $10$, and $15$, respectively. The Gaussian kernel was truncated at three standard deviations. For TV denoising, the \texttt{weight} option in \cite{scikit-image} was set to $4.0$, $8.0$, and $9.0$ for $\sigma = 5$, $10$, and $15$, respectively.

% \noindent\textbf{Hyperparameters.}
The tree $T_\mathrm{max}$ grows until its depth reaches $D_\mathrm{max} = 30$ or the width or height of each leaf node region falls to 2 or below. The number of labels was set to $K = 100$, and the order of the autoregressive model to $D = 11$. For all noise levels, the hyperparameters of prior distributions were set to as follows: $g = 0.75$, $\bm{\alpha} = [0.01, \dots, 0.01]^\top$, $\bm{\mu} = \bm{0}$, $\bm{\Lambda} = \bm{I}$, $a = 1.0$, and $b = 100$.

% \noindent\textbf{Optimization method.}
Simple gradient ascent was used as the optimization method. 
% Preliminary small-scale experiments revealed the following: the objective function begins to decrease after a certain number of iterations; the learning rate has a substantial influence on this behavior; a larger noise level in the degradation process causes the objective function to begin decreasing earlier and to decrease more sharply; and continuing iterations for several consecutive steps after the objective function first begins to decrease, rather than stopping immediately, yields better values of performance metrics such as RMSE. Based on these observations,
Based on preliminary small-scale experiments, the step size at iteration $t$ was set to decay as $\eta_t = \frac{0.1\sigma}{1 + 0.05t}$, and early stopping was applied if the objective function decreased for 10 consecutive steps. The maximum number of iterations was set to 150.

% \noindent\textbf{Initialization and update frequency of VB.}
The initialization procedure was carried out once at the beginning of the algorithm as follows. The initial value of $\bm{v}$ was set to the observed image $\bm{v}'$. The hyperparameters of $q(\bm{\theta}, \bm{\tau}, \bm{\pi})$ were initialized as follows. We initialize $\bm{\alpha}'$ as $\bm{\alpha}$. Since $K = 100$, the observed image of height $h$ and width $w$ was divided into 100 rectangular regions by partitioning each axis into 10 equal parts. Denoting by $\bm{v}'_k$ the vector of pixel values belonging to the $k$-th region and by $\bm{V}'_k$ the matrix of corresponding reference pixel vectors, the hyperparameters $\bm{\Lambda}'_k$, $\bm{\mu}'_k$, $a'_k$, and $b'_k$ were initialized as $\bm{\Lambda}'_k = \bm{\Lambda} + (\bm{V}'_k)^\top \bm{V}'_k$, $\bm{\mu}'_k = \left(\bm{\Lambda}'_k\right)^{-1} \left(\bm{\Lambda}\bm{\mu} + (\bm{V}'_k)^\top \bm{v}'_k\right)$, $a'_k = a + \frac{1}{2} \cdot \frac{hw}{100}$, and $b'_k = b + \frac{1}{2}\left(\bm{\mu}^\top \bm{\Lambda} \bm{\mu}+ (\bm{v}'_k)^\top \bm{v}'_k - (\bm{\mu}'_k)^\top \bm{\Lambda}'_k \bm{\mu}'_k \right).$

%\begin{align*}
%    \bm{\Lambda}'_k &= \bm{\Lambda} + (\bm{V}'_k)^\top \bm{V}'_k, \\
%    \bm{\mu}'_k &= \left(\bm{\Lambda}'_k\right)^{-1}
%        \left(\bm{\Lambda}\bm{\mu} + (\bm{V}'_k)^\top \bm{v}'_k\right), \\
%    a'_k &= a + \frac{1}{2} \cdot \frac{hw}{100}, \\
%    b'_k &= b + \frac{1}{2}\left(
%        \bm{\mu}^\top \bm{\Lambda} \bm{\mu}
%        + (\bm{v}'_k)^\top \bm{v}'_k
%        - (\bm{\mu}'_k)^\top \bm{\Lambda}'_k \bm{\mu}'_k
%    \right).
%\end{align*}

Using these hyperparameters, the iteration starts by updating $q(\bm{z}, T)$. Once the gradient ascent for $\bm{v}$ began, $q(\bm{z}, T, \bm{\theta}, \bm{\tau}, \bm{\pi})$ at the previous iteration was used as the warm start for subsequent VB updates. The number of VB iterations performed per gradient ascent step was set to one.

% \subsection{Results}

\noindent\textbf{Results.}
Table \ref{tab:conventional_comparison_set12} shows the mean RMSE (Root Mean Square Error, smaller is better), PSNR (Peak Signal-to-Noise Ratio, larger is better), and SSIM\cite{SSIM} (Structural Similarity Index Measure, larger is better) over the images in Set12. Visual examples of denoised images are shown in Fig.\ \ref{fig:visual_examples}. When $\sigma$ is small, the proposed method is comparable to TV denoising. However, when $\sigma$ is large, the performance of the proposed method degrades.

% \subsection{Discussion}

\begin{figure}
    \centering
    \begin{subfigure}[b]{0.19\linewidth}
        \includegraphics[width=\linewidth]{clean.png}
        \caption{Clean}
    \end{subfigure}
    \hfill
    \begin{subfigure}[b]{0.19\linewidth}
        \includegraphics[width=\linewidth]{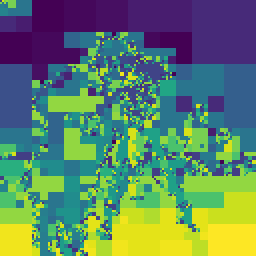}
        \caption{$\sigma = 5$}
    \end{subfigure}
    \hfill
    \begin{subfigure}[b]{0.19\linewidth}
        \includegraphics[width=\linewidth]{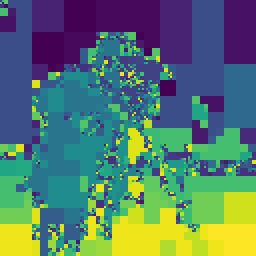}
        \caption{$\sigma = 10$}
    \end{subfigure}
    \hfill
    \begin{subfigure}[b]{0.19\linewidth}
        \includegraphics[width=\linewidth]{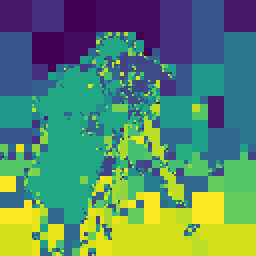}
        \caption{$\sigma = 15$}
    \end{subfigure}
    \begin{subfigure}[b]{0.19\linewidth}
        \includegraphics[width=\linewidth]{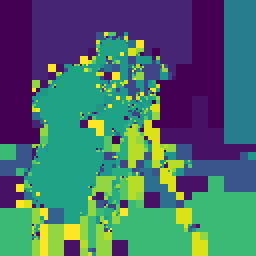}
        \caption{$\sigma = 25$}
    \end{subfigure}
    \caption{The MAP estimates of $\bm z$ and $T$. The color represents the value of $\bm z$. For different $\sigma$, the same color may represent different $\bm z$.}
    \label{fig:map_z_T}
\end{figure}

\noindent\textbf{Discussion.}
% \noindent\textbf{Performance degradation under high noise levels.}
Figure~\ref{fig:map_z_T} visualizes the MAP estimates of $\bm{z}$ and $T$ obtained as a byproduct of the denoising process. 
As $\sigma$ increases, the quadtree region segmentation becomes coarser. This might be a cause of the performance degradation under high noise levels. This problem might be addressed by modifying the hyperparameter settings of the prior distributions or adjusting the initialization of the VB procedure. 
% For instance, a prior that encourages region splitting based on changes in trend rather than changes in local noise level may be more appropriate. Alternatively, instead of using the observed image directly as the initialization, a pre-denoised image obtained by some auxiliary method could be employed. 
Since the primary contribution of this paper is the proposal of a novel denoising framework, tuning of these hyperparameters is left as a direction for future work.

% \noindent\textbf{Sensitivity to the choice of optimization method.}
While simple gradient ascent was adopted here, we found the results were sensitive to the choice of step size. This problem can be addressed by using other optimization methods, such as quasi-Newton methods (e.g., \cite{liu1989limited}) and Adam\cite{Adam}.
% Indeed, preliminary small-scale experiments confirmed that the truncated Newton conjugate-gradient (TNC) method yields improved performance, although it was not adopted in the main experiments due to its substantially higher computational cost. Further improvements may be achievable by employing adaptive optimization methods such as Adam. Since the objective function is expressed as a quadratic form, quasi-Newton methods are also a viable option. However, as is common in image-related optimization problems, the large number of variables to be optimized must be taken into account. 
Since the aim of this paper is to propose a flexible framework to which a variety of optimization methods can be applied, the selection of the most suitable optimizer within the framework is also left as a topic for future investigation.

\begin{table}[htb]
    \caption{Denoising performances on Set12.}
    \label{tab:conventional_comparison_set12}
    \centering
    \begin{tabular}{lcccc}
        \hline
        Method & $\sigma$ & RMSE & PSNR & SSIM \\
        \hline
        GF & $5$ & 4.996 & 34.16 & 0.8812 \\
        TV & $5$ & 3.983 & 36.17 & 0.9453 \\
        BM3D & $5$ & \textbf{3.214} & \textbf{38.02} & \textbf{0.9603} \\
        DIP & $5$ & 5.746 & 33.14 & 0.8915 \\
        N2V & $5$ & 8.088 & 30.23 & 0.8782 \\
        Proposed & $5$ & 3.827 & 36.48 & 0.9273 \\
        \hline
        GF & $10$ & 9.549 & 28.53 & 0.7118 \\
        TV & $10$ & 6.131 & 32.45 & 0.8996 \\
        BM3D & $10$ & \textbf{4.892} & \textbf{34.39} & \textbf{0.9268} \\
        DIP & $10$ & 6.223 & 32.38 & 0.8850 \\
        N2V & $10$ & 9.314 & 28.94 & 0.8383 \\
        Proposed & $10$ & 6.322 & 32.13 & 0.8604 \\
        \hline
        GF & $15$ & 10.263 & 27.91 & 0.6947 \\
        TV & $15$ & 7.626 & 30.54 & 0.8474 \\
        BM3D & $15$ & \textbf{6.159} & \textbf{32.41} & \textbf{0.8987} \\
        DIP & $15$ & 7.103 & 31.16 & 0.8594 \\
        N2V & $15$ & 10.099 & 28.18 & 0.8072 \\
        Proposed & $15$ & 10.893 & 27.39 & 0.6779 \\
        \hline
    \end{tabular}
\end{table}

%%%%%
\section{Concluding Remark}
%%%%%
We proposed the probabilistic image denoising framework. The experiments clarified the directions for future improvement, and we believe that our framework is promising.

\clearpage

%%%%%
\bibliographystyle{IEEEtran}
\bibliography{refs}
%%%%%

\clearpage

%%%%%
\appendices
%%%%%

%%%%%
\section{Overview of problem setup and graphical model} \label{section_append_model}
%%%%%

Figure \ref{fig:image_recovery} illustrates the overview of our problem setup. As described in Section \ref{section_proposed_model}, the parameters $\bm z$, $T$, $\bm \theta$, $\bm \tau$, and $\bm \pi$ are generated according to the prior distributions. Then, an original image $\bm v$ is generated according to the probabilistic image generative model. The observed image $\bm v'$ is obtained through a degradation process $p(\bm v' | \bm v)$. Our goal is to restore the original image $\bm v$ from the observed image $\bm v'$.

\begin{figure}[h]
    \centering
    \includegraphics[width=\linewidth]{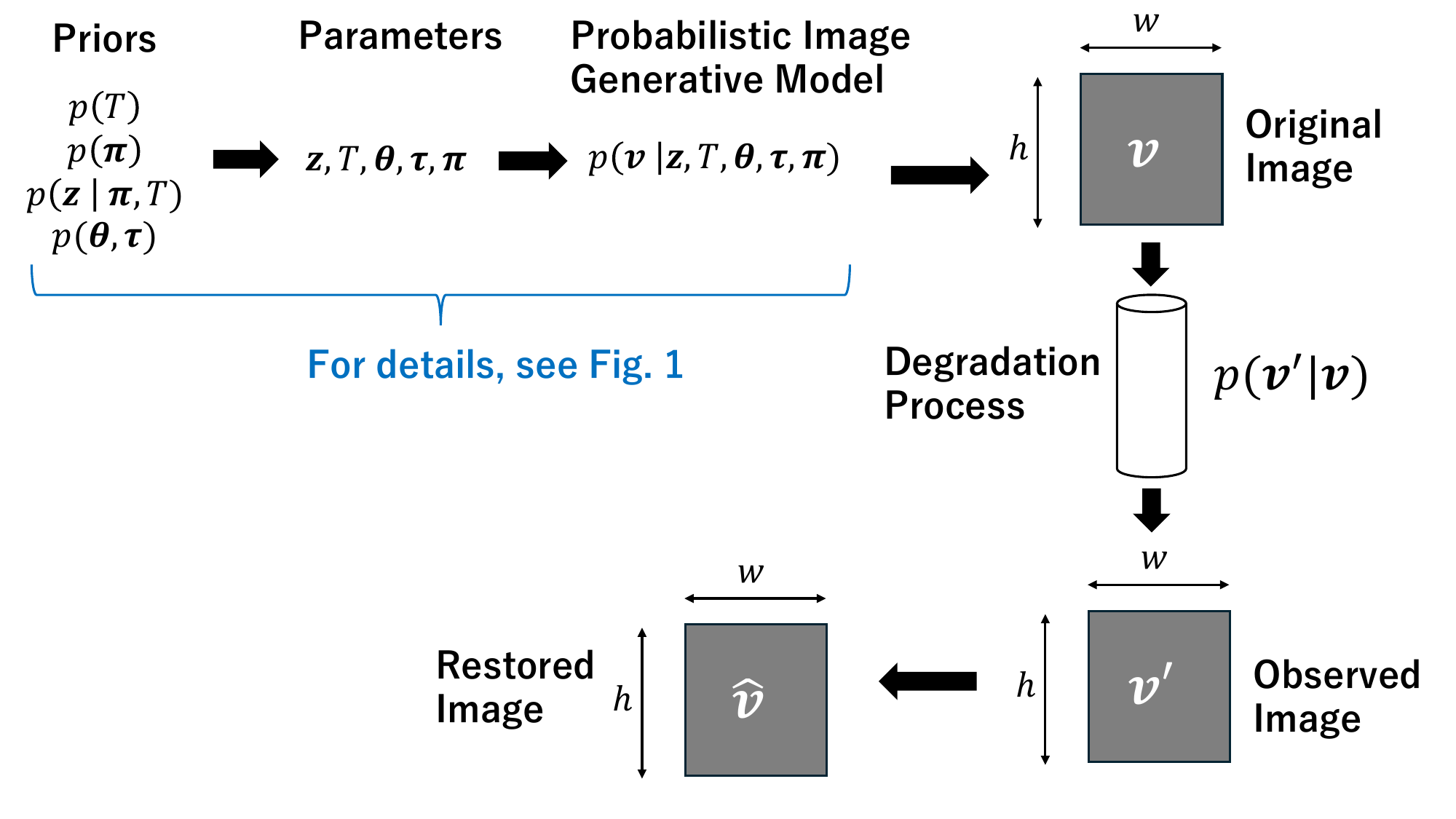}
    \caption{Overview of image denoising}
    \label{fig:image_recovery}
\end{figure}

Figure \ref{fig:graphical_model} shows the graphical model of our proposed model in Section \ref{section_proposed_model}. We denote observed variables by shading the corresponding nodes.

\begin{figure}[h]
    \centering
    \includegraphics[width=\linewidth]{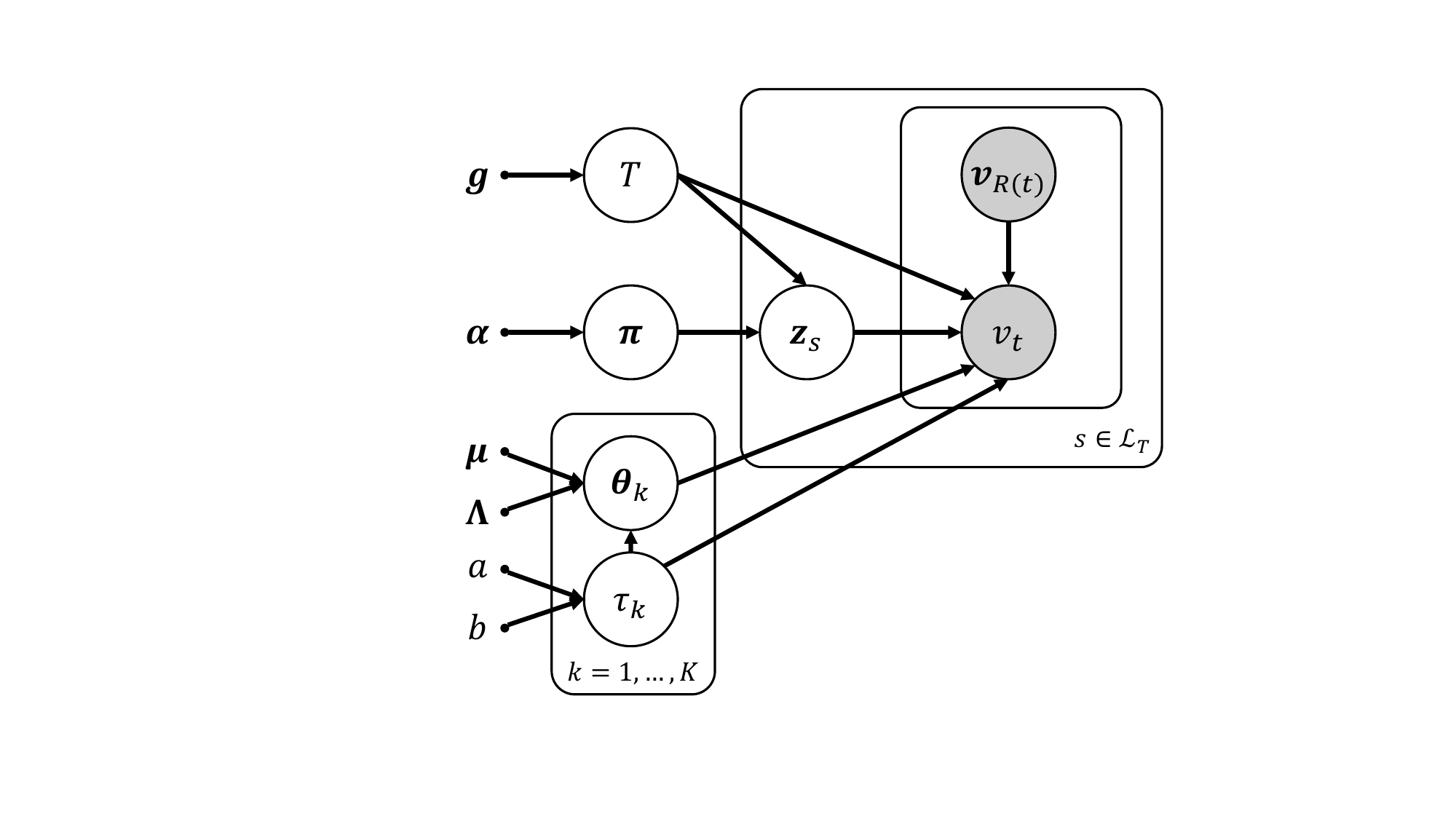}
    \caption{Graphical model}
    \label{fig:graphical_model}
\end{figure}

%%%%%
\section{Example of a reference pixel vector} \label{section_append_reference_vector}
%%%%%

As shown in Fig.\ \ref{fig:R}, we fix the position indices of the neighboring pixels of $v_t$ in advance. For example, when $D=4$, $R(t)=\{1,2,3\}$, i.e., the indices of the left, top-right, and top positions of the $t$-th pixel value $v_t$, and the 4-dimensional vector obtained by $R(t)$ and the constant term 1 becomes $\bm v_{R(t)}$.

\begin{figure}[t]
    \centering
    \includegraphics[width=0.8\linewidth]{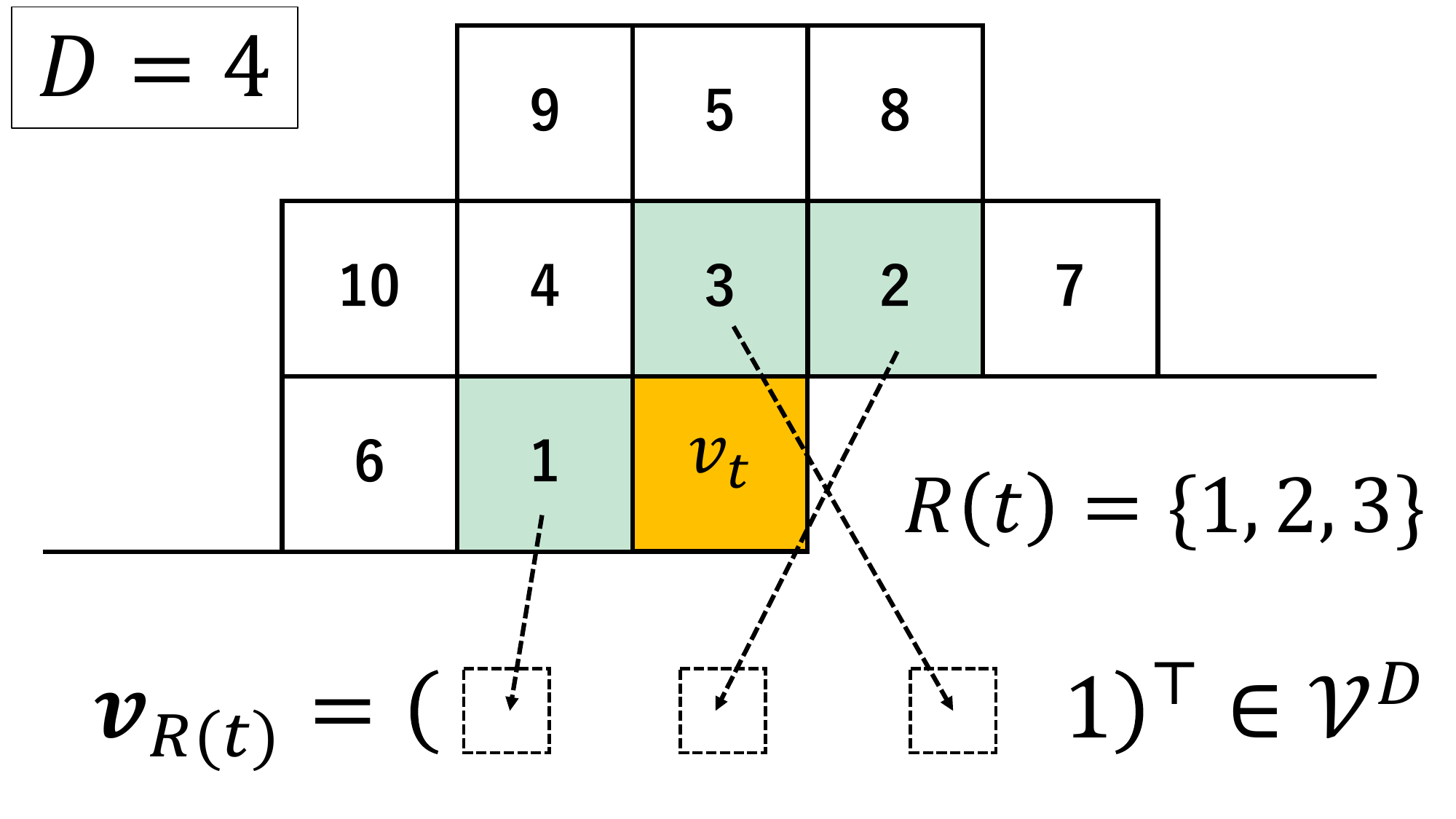}
    \caption{Example of $R(t)$ and $\bm v_{R(t)}$}
    \label{fig:R}
\end{figure}

%%%%%
\section{Proof of Proposition \ref{Prop_update_z_T}} \label{section_append_Prop_update_z_T}
%%%%%

From \eqref{q_star_z_T} and the assumptions in Section \ref{section_proposed_model}, 
\begin{align*}
    &\ln q(\bm z, T) \\
    &= \mathbb{E}_{q(\bm \theta, \bm \tau, \bm \pi)} \left[ \ln p(\bm v, \bm z, T, \bm \theta, \bm \tau, \bm \pi) \right] + \mathrm{const.} \\
    %&= \sum_{s \in \mathcal{I}_T} \ln g_s + \sum_{s \in \mathcal{L}_T} \ln (1-g_s) \\
    %& \qquad + \sum_{s \in \mathcal{L}_T} \sum_{k=1}^K z_{s,k} \mathbb{E}_{q^*(\bm \pi)} [\ln \pi_k] \nonumber \\
    %&\qquad + \sum_{s \in \mathcal{L}_T} \sum_{k=1}^K z_{s,k} \mathbb{E}_{q^*(\bm \theta, \bm \tau)} [\ln \mathcal{N}(\bm v_s | \bm V_s \bm \theta_k, (\tau_k \bm I_s)^{-1})] \\
    %& \qquad + \mathrm{const.} \\
    &= \sum_{s \in \mathcal{I}_T} \ln g_s + \sum_{s \in \mathcal{L}_T} \ln (1-g_s) \nonumber \\
    &\qquad + \sum_{s \in \mathcal{L}_T} \sum_{k=1}^K z_{s,k} \Big \{ \mathbb{E}_{q(\bm \pi)} [\ln \pi_k] \\
    & \qquad \qquad + \mathbb{E}_{q(\bm \theta, \bm \tau)} [\ln \mathcal{N}(\bm v_s | \bm V_s \bm \theta_k, (\tau_k \bm I_s)^{-1})] \Big \} + \mathrm{const.}
\end{align*}

We define $\rho_{s,k}$ and $\pi'_{s,k}$ as
\begin{align}
    \ln \rho_{s,k} &\coloneqq \mathbb{E}_{q(\bm \pi)}[ \ln \pi_k] \nonumber \\
    & \qquad + \mathbb{E}_{q(\bm \theta, \bm \tau)} [\ln \mathcal{N}(\bm v_s | \bm V_s \bm \theta_k, (\tau_k \bm I_s)^{-1})], \label{def_ln_rho} \\
    \pi'_{s,k} &\coloneqq \frac{\rho_{s,k}}{\sum_{k=1}^K \rho_{s,k}}. \nonumber
\end{align}
Then, 
\begin{align*}
    &\ln q(\bm z, T) = \\
    &\quad \sum_{s \in \mathcal{I}_T} \ln g_s + \sum_{s \in \mathcal{L}_T} \ln (1-g_s) + \sum_{s \in \mathcal{L}_T} \ln \sum_{k=1}^K \rho_{s,k} \\
    &\qquad + \sum_{s \in \mathcal{L}_T} \sum_{k=1}^K z_{s,k} \ln \pi'_{s,k} + \mathrm{const.},
\end{align*}
i.e., 
\begin{align*}
    & q(\bm z, T) \propto \\
    & \left( \prod_{s \in \mathcal{I}_T} g_s \right) \left( \prod_{s \in \mathcal{L}_T} (1-g_s) \sum_{k=1}^K \rho_{s,k} \right) \left( \prod_{s \in \mathcal{L}_T} \prod_{k=1}^K (\pi'_{s,k})^{z_{s,k}} \right).
\end{align*}

By using Lemma \ref{lemma_tree_distribution} in Appendix \ref{section_append_tree_distribution}, we get
\begin{align*}
    &q(\bm z, T) = \\
    &\quad \left( \prod_{s \in \mathcal{I}_T} g'_s \right) \left( \prod_{s \in \mathcal{L}_T} (1-g'_s) \right) \left( \prod_{s \in \mathcal{L}_T} \prod_{k=1}^K (\pi'_{s,k})^{z_{s,k}} \right),
\end{align*}
where $g'_s$ is defined as in \eqref{def_g_prime}. Therefore, we see that $q(\bm z, T)$ can be decomposed as $q(T) \prod_{s \in \mathcal{L}_T} q(\bm z_s)$. In Appendix \ref{section_append_calc_log_rho}, we will demonstrate that $\ln \rho_{s,k}$ can be calculated as in \eqref{eq:rho_closed_form}.

%%%%%
\section{Proof of Proposition \ref{Prop_update_pi_theta_tau}} \label{section_append_Prop_update_pi_theta_tau}
%%%%%
From \eqref{q_star_theta_tau_pi} and the assumptions in Section \ref{section_proposed_model}, 
\begin{align*}
    &\ln q (\bm \theta, \bm \tau , \bm \pi) \\
    &= \mathbb{E}_{q(\bm z, T)} \left[ \ln p(\bm v, \bm z, T, \bm \theta, \bm \tau, \bm \pi) \right] + \mathrm{const.} \\
    &= \sum_{k=1}^K \ln \mathcal{N} (\bm \theta_k | \bm \mu, (\tau_k \bm \Lambda)^{-1}) \mathrm{Gam}(\tau_k | a, b) + \ln \mathrm{Dir} (\bm \pi | \bm \alpha) \\
    &\quad+ \mathbb{E}_{q(\bm z, T)} \left[ \sum_{s \in \mathcal{L}_T} \sum_{k=1}^K z_{s,k} \ln \pi_k \right] \nonumber \\
    &\qquad + \mathbb{E}_{q(\bm z, T)} \left[ \sum_{s \in \mathcal{L}_T} \sum_{k=1}^K z_{s,k} \ln \mathcal{N}(\bm v_s | \bm V_s \bm \theta_k, (\tau_k \bm I_s)^{-1}) \right] \\
    &\qquad \qquad + \mathrm{const.}
\end{align*}

From this equation form, we see that $q(\bm \theta, \bm \tau, \bm \pi)$ can be decomposed as $q(\bm \theta, \bm \tau, \bm \pi) = q(\bm \theta, \bm \tau) q(\bm \pi)$.

First, $q(\bm \pi)$ is calculated as follows.
\begin{align*}
    &\ln q(\bm \pi) \\
    &= \sum_{k=1}^K (\alpha_k - 1) \ln \pi_k \\
    & \quad + \mathbb{E}_{q(T)} \left[ \sum_{s \in \mathcal{L}_T} \sum_{k=1}^K \pi'_{s,k} \ln \pi_k \right] + \mathrm{const.} \\
    &= \sum_{k=1}^K (\alpha_k - 1) \ln \pi_k \\
    & \quad + \mathbb{E}_{q(T)} \left[ \sum_{s \in \mathcal{S}_\mathrm{max}} I\{ s \in \mathcal{L}_T \} \sum_{k=1}^K \pi'_{s,k} \ln \pi_k \right] \\
    & \quad + \mathrm{const.} \\
    &= \sum_{k=1}^K (\alpha_k - 1) \ln \pi_k \\
    & \quad + \sum_{s \in \mathcal{S}_\mathrm{max}} q(s \in \mathcal{L}_T) \sum_{k=1}^K \pi'_{s,k} \ln \pi_k + \mathrm{const.} \\
    &= \sum_{k=1}^K \left(\alpha_k + \sum_{s \in \mathcal{S}_\mathrm{max}} q(s \in \mathcal{L}_T) \pi'_{s,k} - 1 \right) \ln \pi_k \\
    & \quad + \mathrm{const.}
\end{align*}
Thus, setting
\begin{align*}
    \alpha'_k \coloneqq \alpha_k + \sum_{s \in \mathcal{S}_\mathrm{max}} q(s \in \mathcal{L}_T) \pi'_{s,k},
\end{align*}
we conclude that
\begin{align*}
    q(\bm \pi) = \mathrm{Dir}(\bm \pi | \bm \alpha').
\end{align*}

Next, $q(\bm \theta, \bm \tau)$ is calculated as follows.
\begin{align*}
    &\ln q(\bm \theta, \bm \tau) \\
    &= \sum_{k=1}^K \ln \mathcal{N} (\bm \theta_k | \bm \mu, (\tau_k \bm \Lambda)^{-1}) \mathrm{Gam}(\tau_k | a, b) \nonumber \\
    &\qquad + \sum_{s \in \mathcal{S}_\mathrm{max}} q(s \in \mathcal{L}_T) \sum_{k=1}^K \pi'_{s,k} \ln \mathcal{N}(\bm v_s | \bm V_s \bm \theta_k, (\tau_k \bm I_s)^{-1}) \\
    & \qquad + \mathrm{const.} \\
    &= \sum_{k=1}^K \Biggl[ \frac{D}{2} \ln \tau_k - \frac{\tau_k}{2} (\bm \theta_k - \bm \mu)^\top \bm \Lambda (\bm \theta_k - \bm \mu) \\
    & \qquad \qquad + (a-1) \ln \tau_k - b \tau_k \nonumber \\
    &\qquad \qquad + \sum_{s \in \mathcal{S}_\mathrm{max}} q(s \in \mathcal{L}_T) \pi'_{s,k} \left( \frac{h_s w_s}{2} \ln \tau_k \right. \\
    & \qquad \qquad \qquad \left. -\frac{\tau_k}{2}(\bm v_s - \bm V_s \bm \theta_k)^\top (\bm v_s - \bm V_s \bm \theta_k) \right) \Biggr] + \mathrm{const.} \\
    &= \sum_{k=1}^K \Biggl[ \frac{D}{2} \ln \tau_k \\
    & \qquad - \frac{\tau_k}{2} \Biggl\{ \bm \theta_k^\top \left(\bm \Lambda + \sum_{s \in \mathcal{S}_\mathrm{max}} q(s \in \mathcal{L}_T) \pi'_{s,k} \bm V_s^\top \bm V_s \right) \bm \theta_k \\
    &\qquad - 2 \bm \theta_k^\top \left(\bm \Lambda \bm \mu + \sum_{s \in \mathcal{S}_\mathrm{max}} q(s \in \mathcal{L}_T) \pi'_{s,k} \bm V_s^\top \bm v_s \right) \\
    &\qquad + \bm \mu^\top \bm \Lambda \bm \mu + \sum_{s \in \mathcal{S}_\mathrm{max}} q(s \in \mathcal{L}_T) \pi'_{s,k} \bm v_s^\top \bm v_s \Biggr\} \\
    &\qquad + (a-1) \ln \tau_k - b \tau_k \\
    & \qquad + \left( \sum_{s \in \mathcal{S}_\mathrm{max}} q(s \in \mathcal{L}_T) \pi'_{s,k} \frac{h_s w_s}{2} \right) \ln \tau_k \Biggr] + \mathrm{const.}
\end{align*}
Threfore, setting $\bm \Lambda'_k$ and $\bm \mu'_k$ as in \eqref{def_lambda_prime} and \eqref{def_mu_prime}, we have
\begin{align*}
    &\ln q(\bm \theta, \bm \tau) \\
    &= \sum_{k=1}^K \Biggl[ \ln \mathcal{N}(\bm \theta_k | \bm \mu'_k, (\tau_k \bm \Lambda'_k)^{-1}) - \frac{\tau_k}{2} \\
    & \times \Biggl\{ \bm \mu^\top \bm \Lambda \bm \mu + \sum_{s \in \mathcal{S}_\mathrm{max}} q(s \in \mathcal{L}_T) \pi'_{s,k} \bm v_s^\top \bm v_s - (\bm \mu'_k)^\top \bm \Lambda'_k \bm \mu'_k \Biggr\} \nonumber \\
    &\qquad + (a-1) \ln \tau_k - b \tau_k \\
    &\qquad + \left( \sum_{s \in \mathcal{S}_\mathrm{max}} q(s \in \mathcal{L}_T) \pi'_{s,k} \frac{h_s w_s}{2} \right) \ln \tau_k \Biggr] + \mathrm{const.}
\end{align*}
Moreover, setting $a'_k$ and $b'_k$ as in \eqref{def_a_prime} and \eqref{def_b_prime}, we conclude that
\begin{align*}
    q(\bm \theta, \bm \tau) = \prod_{k=1}^K \mathcal{N}(\bm \theta_k | \bm \mu'_k, (\tau_k \bm \Lambda'_k)^{-1}) \mathrm{Gam}(\tau_k | a'_k, b'_k).
\end{align*}

%%%%%
\section{Calculation of $\ln \rho_{s,k}$} \label{section_append_calc_log_rho}
%%%%%
In this appendix, we show the calculation of \eqref{def_ln_rho}. First, from the property of Dirichlet distribution, 
\begin{align*}
    \mathbb{E}_{q(\bm \pi)}[ \ln \pi_k] = \psi (\alpha'_k) - \psi \left( \textstyle \sum_{k=1}^K \alpha'_k \right).
\end{align*}

Next, from the property of Gamma distribution and the lemma on quadratic form given in Appendix \ref{section_append_quadratic_form}, 
\begin{align*}
    &\mathbb{E}_{q(\bm \theta, \bm \tau)} [\ln \mathcal{N}(\bm v_s | \bm V_s \bm \theta_k, (\tau_k \bm I_s)^{-1})] \\
    &= \frac{h_s w_s}{2} \left( -\ln 2\pi + \mathbb{E}_{q(\tau_k)} [\ln \tau_k] \right) \\
    & \quad - \mathbb{E}_{q(\tau_k)} \left[ \frac{\tau_k}{2} \mathbb{E}_{q(\bm \theta_k | \tau_k)} \left[ (\bm v_s - \bm V_s \bm \theta_k)^\top (\bm v_s - \bm V_s \bm \theta_k) \right] \right] \\
    &= \frac{h_s w_s}{2} \left( -\ln 2\pi + \psi (a'_k) - \ln b'_k \right) \\
    & \quad - \mathbb{E}_{q(\tau_k)} \left[ \frac{\tau_k}{2} \Bigl( (\bm v_s - \bm V_s \bm \mu'_k)^\top (\bm v_s - \bm V_s \bm \mu'_k) \right. \\
    & \left. \qquad \qquad \qquad \qquad + \frac{1}{\tau_k} \mathrm{Tr} \{ \bm V_s^\top \bm V_s (\bm \Lambda'_k)^{-1} \} \Bigr) \right] \\
    &= \frac{h_s w_s}{2} \left( -\ln 2\pi + \psi (a'_k) - \ln b'_k \right) \\
    & \quad - \frac{a'_k}{2 b'_k} (\bm v_s - \bm V_s \bm \mu'_k)^\top (\bm v_s - \bm V_s \bm \mu'_k) \\
    & \qquad - \frac{1}{2} \mathrm{Tr} \{ \bm V_s^\top \bm V_s (\bm \Lambda'_k)^{-1} \}.
\end{align*}

Hence, $\ln \rho_{s,k}$ (which is defined in \eqref{def_ln_rho}) is expressed as \eqref{eq:rho_closed_form}.

%%%%%
\section{Lemma on tree distribution} \label{section_append_tree_distribution}
%%%%%

\begin{Lemma} \label{lemma_tree_distribution}
    For any $s \in \mathcal{S}_\mathrm{max}$, let $a_s > 0$ and $b_s > 0$. Suppose that a probability distribution of $T$ is represented as
    \begin{align}
        p(T) \propto \left( \prod_{s \in \mathcal{I}_T} a_s \right) \left( \prod_{s \in \mathcal{L}_T} b_s \right). \label{eq_distribution_T_without_normalization}
    \end{align}
    Then, the probability distribution of $p(T)$ (including the normalization constant) is give as
    \begin{align}
        p(T) = \left( \prod_{s \in \mathcal{I}_T} g_s \right) \left( \prod_{s \in \mathcal{L}_T} (1-g_s) \right), \label{eq:lemma1}
    \end{align}
    where
    \begin{align}
        \phi_s &\coloneqq \begin{cases}
            b_s + a_s \prod_{s_\mathrm{ch} \in \mathrm{Ch}(s)} \phi_{s_\mathrm{ch}}, & s \in \mathcal{I}_\mathrm{max}, \\
            b_s, & s \in \mathcal{L}_\mathrm{max},
        \end{cases} \label{eq:lemma1_phi_s} \\
        g_s &\coloneqq \begin{cases}
            \frac{a_s \prod_{s_\mathrm{ch} \in \mathrm{Ch}(s)} \phi_{s_\mathrm{ch}}}{\phi_s}, & s \in \mathcal{I}_\mathrm{max}, \\
            0, & s \in \mathcal{L}_\mathrm{max}. \label{eq:lemma1_g_s}
        \end{cases}
    \end{align}
    Note that $0 \leq g_s \leq 1$ for any $s \in \mathcal{S}_\mathrm{max}$.
\end{Lemma}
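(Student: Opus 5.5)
We prove Lemma \ref{lemma_tree_distribution} by induction over the depth of $T_\mathrm{max}$, computing the normalization constant recursively. For each node $s \in \mathcal{S}_\mathrm{max}$, let $\mathcal{T}_s$ be the set of subtrees rooted at $s$ that are contained in the complete subtree of $T_\mathrm{max}$ below $s$. Define
\begin{align*}
Z_s \coloneqq \sum_{T \in \mathcal{T}_s} \left( \prod_{s' \in \mathcal{I}_T} a_{s'} \right) \left( \prod_{s' \in \mathcal{L}_T} b_{s'} \right).
\end{align*}
The first step is to show $Z_s = \phi_s$ for every $s$, by induction from the leaves of $T_\mathrm{max}$ upward.

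If $s \in \mathcal{L}_\mathrm{max}$, then $\mathcal{T}_s$ contains only the single-node tree, so $Z_s = b_s = \phi_s$. If $s \in \mathcal{I}_\mathrm{max}$, partition $\mathcal{T}_s$ according to whether $s$ is a leaf or an internal node. When $s$ is a leaf, the tree is the single node $s$ and contributes $b_s$. When $s$ is internal, the tree is uniquely determined by a choice of one subtree $T_{s_\mathrm{ch}} \in \mathcal{T}_{s_\mathrm{ch}}$ for each of the four children. In that case the weight factorizes as $a_s \prod_{s_\mathrm{ch}} (\text{weight of } T_{s_\mathrm{ch}})$. Exchanging the sum and the product then gives $a_s \prod_{s_\mathrm{ch} \in \mathrm{Ch}(s)} Z_{s_\mathrm{ch}}$. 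By the induction hypothesis,
\begin{align*}
Z_s = b_s + a_s \prod_{s_\mathrm{ch}} \phi_{s_\mathrm{ch}} = \phi_s.
\end{align*}
Applying this at the root shows that the normalized distribution is $p(T) = \prod_{\mathcal{I}_T} a_s \prod_{\mathcal{L}_T} b_s / \phi_{s_\lambda}$, where $s_\lambda$ is the root.

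Next we show that this expression equals \eqref{eq:lemma1}. Substituting the definitions gives $g_s = a_s \prod_{\mathrm{ch}} \phi_{\mathrm{ch}} / \phi_s$ for internal $s$. For $s \in \mathcal{I}_\mathrm{max}$ we have $1 - g_s = b_s / \phi_s$, and for $s \in \mathcal{L}_\mathrm{max}$ we have $1 - g_s = 1 = b_s / \phi_s$. So in all cases $1 - g_s = b_s / \phi_s$. Hence
\begin{align*}
\prod_{s \in \mathcal{I}_T} g_s \prod_{s \in \mathcal{L}_T} (1-g_s) = \prod_{\mathcal{I}_T} a_s \prod_{\mathcal{L}_T} b_s \cdot \frac{\prod_{s \in \mathcal{I}_T} \prod_{\mathrm{Ch}(s)} \phi_{\mathrm{ch}}}{\prod_{s \in \mathcal{I}_T} \phi_s \prod_{s \in \mathcal{L}_T} \phi_s}.
\end{align*}
The main bookkeeping step is the telescoping of the fraction. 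Every node of $T$ other than the root is a child of exactly one internal node of $T$, and conversely every child of an internal node of $T$ lies in $T$. Therefore the numerator equals $\prod_{s \in \mathcal{S}_T \setminus \{s_\lambda\}} \phi_s$, while the denominator equals $\prod_{s \in \mathcal{S}_T} \phi_s$. The ratio is $1/\phi_{s_\lambda}$, which matches the normalized distribution.

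Finally, $0 \le g_s \le 1$ follows because $a_s, b_s > 0$. By induction every $\phi_s > 0$, and $\phi_s$ is the sum of two positive terms, one of which is the numerator of $g_s$.
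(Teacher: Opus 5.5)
Your proof is correct and follows the same route as the paper: identify the normalizer as $\phi_{s_\lambda}$, use $1-g_s=b_s/\phi_s$ on every leaf of $T$ (including those in $\mathcal{L}_\mathrm{max}$, where $g_s=0$ and $\phi_s=b_s$), and telescope the $\phi_s$ factors. The only differences are that you prove the normalization identity by induction from the leaves of $T_\mathrm{max}$ instead of citing \cite[Theorem 1]{full_rooted_trees}, and you spell out the telescoping step, correctly using that every $T\in\mathcal{T}$ is full so each internal node of $T$ has all of $\mathrm{Ch}(s)$ in $T$; this makes the argument self-contained.
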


\begin{IEEEproof}
    From \cite[Theorem 1]{full_rooted_trees}, the normalization constant of \eqref{eq_distribution_T_without_normalization} is written as
    \begin{align*}
        \sum_{T \in \mathcal{T}} \left( \prod_{s \in \mathcal{I}_T} a_s \right) \left( \prod_{s \in \mathcal{L}_T} b_s \right) = \phi_{s_\lambda},
    \end{align*}
    where $\phi_{s_\lambda}$ is defined as in \eqref{eq:lemma1_phi_s}. Hence, $p(T)$ is represented as
    \begin{align}
        p(T) = \frac{1}{\phi_{s_\lambda}} \left( \prod_{s \in \mathcal{I}_T} a_s \right) \left( \prod_{s \in \mathcal{L}_T} b_s \right). \label{eq:lemma1_q_t_with_phi}
    \end{align}

    Now, we prove that \eqref{eq:lemma1} equals \eqref{eq:lemma1_q_t_with_phi} by substituting \eqref{eq:lemma1_phi_s} and \eqref{eq:lemma1_g_s} into the right-hand side of \eqref{eq:lemma1}:
    \begin{align*}
    &\left( \prod_{s \in \mathcal{I}_T} g_s \right) \left( \prod_{s \in \mathcal{L}_T} (1-g_s) \right) \\
    & \overset{(a)}{=} \left( \prod_{s \in \mathcal{I}_T} g_s \right) \left( \prod_{s \in \mathcal{L}_T \cap \mathcal{I}_\mathrm{max}} (1-g_s) \right) \left( \prod_{s \in \mathcal{L}_T \cap \mathcal{L}_\mathrm{max}} 1 \right) \\
    & \overset{(b)}{=} \left( \prod_{s \in \mathcal{I}_T} \frac{a_s \prod_{s_\mathrm{ch} \in \mathrm{Ch}(s)} \phi_{s_\mathrm{ch}}}{\phi_s} \right) \\
    & \quad \times \left( \prod_{s \in \mathcal{L}_T \cap \mathcal{I}_\mathrm{max}} \left( 1 - \frac{a_s \prod_{s_\mathrm{ch} \in \mathrm{Ch}(s)} \phi_{s_\mathrm{ch}}}{\phi_s} \right) \right) \\
    & \qquad \times \left( \prod_{s \in \mathcal{L}_T \cap \mathcal{L}_\mathrm{max}} \frac{\phi_s}{\phi_s} \right) \\  
    & \overset{(c)}{=} \left( \prod_{s \in \mathcal{I}_T} \frac{a_s \prod_{s_\mathrm{ch} \in \mathrm{Ch}(s)} \phi_{s_\mathrm{ch}}}{\phi_s} \right) \\
    & \quad \times \left( \prod_{s \in \mathcal{L}_T \cap \mathcal{I}_\mathrm{max}} \frac{\phi_s - a_s \prod_{s_\mathrm{ch} \in \mathrm{Ch}(s)} \phi_{s_\mathrm{ch}}}{\phi_s} \right) \\
    & \qquad \times \left( \prod_{s \in \mathcal{L}_T \cap \mathcal{L}_\mathrm{max}}  \frac{b_s}{\phi_s} \right) \\  
    &= \left( \prod_{s \in \mathcal{I}_T} \frac{a_s \prod_{s_\mathrm{ch} \in \mathrm{Ch}(s)} \phi_{s_\mathrm{ch}}}{\phi_s} \right) \left( \prod_{s \in \mathcal{L}_T \cap \mathcal{I}_\mathrm{max}} \frac{b_s}{\phi_s} \right) \\
    & \qquad \times \left( \prod_{s \in \mathcal{L}_T \cap \mathcal{L}_\mathrm{max}} \frac{b_s}{\phi_s} \right) \\  
    &= \left( \prod_{s \in \mathcal{I}_T} \frac{a_s \prod_{s_\mathrm{ch} \in \mathrm{Ch}(s)} \phi_{s_\mathrm{ch}}}{\phi_s} \right) \left( \prod_{s \in \mathcal{L}_T} \frac{b_s}{\phi_s} \right) \\
    &\overset{(d)}{=} \frac{1}{\phi_{s_\lambda}} \left( \prod_{s \in \mathcal{I}_T} a_s \right) \left( \prod_{s' \in \mathcal{L}_T} b_s \right),
\end{align*}
where $(a)$ follows from $g_s = 0$ for $s \in \mathcal{L}_\mathrm{max}$ (see Section \ref{section_tree_prior}); $(b)$ is due to \eqref{eq:lemma1_g_s}; $(c)$ follows from \eqref{eq:lemma1_phi_s}; and $(d)$ is due to the telescoping product, i.e., the terms $\phi_s$ except $\phi_{s_\lambda}$ are canceled out.
\end{IEEEproof}

%%%%%
\section{Lemma on quadratic form} \label{section_append_quadratic_form}
%%%%%

\begin{Lemma}
    Let $\bm x$ be a random variable with mean $\bm \mu$ and covariance matrix $\bm \Sigma$. Also, let $\bm A$, $\bm B$ be a matrix (where $\bm B$ is a symmetric matrix) and $\bm y$ be a vector such that the following sum and product can be calculated. Then, it holds that
    \begin{align*}
        & \mathbb{E}_{\bm x} [ (\bm A \bm x - \bm y)^\top \bm B (\bm A \bm x - \bm y) ] \\
        &\quad = (\bm A \bm \mu - \bm y)^\top \bm B (\bm A \bm \mu - \bm y) + \mathrm{Tr}\{ \bm A^\top \bm B \bm A \bm \Sigma \}.
    \end{align*}
\end{Lemma}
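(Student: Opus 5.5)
The plan is to set $\bm x = \bm \theta$-type random vector, $\bm m := \bm A \bm \mu - \bm y$, and $\bm \epsilon := \bm x - \bm \mu$. By definition of mean and covariance, $\mathbb{E}_{\bm x}[\bm \epsilon] = \bm 0$ and $\mathbb{E}_{\bm x}[\bm \epsilon \bm \epsilon^\top] = \bm \Sigma$. The proof rests on writing $\bm A \bm x - \bm y = \bm m + \bm A \bm \epsilon$. Substituting this into the quadratic form and expanding gives
\begin{align*}
(\bm A \bm x - \bm y)^\top \bm B (\bm A \bm x - \bm y) = \bm m^\top \bm B \bm m + 2 \bm m^\top \bm B \bm A \bm \epsilon + \bm \epsilon^\top \bm A^\top \bm B \bm A \bm \epsilon .
\end{align*}
The symmetry of $\bm B$ is what merges the two cross terms $\bm m^\top \bm B \bm A \bm \epsilon$ and $\bm \epsilon^\top \bm A^\top \bm B \bm m$ into a single term with coefficient 2.

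Next I take expectations term by term.
\begin{itemize}
\item The first term is deterministic, so its expectation is $\bm m^\top \bm B \bm m$, which is the claimed first summand.
\item The cross term is linear in $\bm \epsilon$, so its expectation is $2\bm m^\top \bm B \bm A\, \mathbb{E}[\bm \epsilon] = 0$.
\item The last term is a scalar, so it equals its own trace. By cyclicity, $\bm \epsilon^\top \bm A^\top \bm B \bm A \bm \epsilon = \mathrm{Tr}\{\bm A^\top \bm B \bm A \bm \epsilon \bm \epsilon^\top\}$. Trace and expectation are both linear, so they commute, and the expectation becomes $\mathrm{Tr}\{\bm A^\top \bm B \bm A \bm \Sigma\}$.
\end{itemize}

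Adding the three contributions gives the stated identity. There is no genuine obstacle here. The only point needing care is the trace trick, namely the interchange of $\mathbb{E}$ with $\mathrm{Tr}$ and the use of $\mathbb{E}[\bm\epsilon\bm\epsilon^\top]=\bm\Sigma$. This requires finite second moments, which the existence of $\bm\Sigma$ guarantees.
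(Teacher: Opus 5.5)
Your proof is correct and is essentially the paper's argument. Both expand the quadratic form and apply the trace trick; the only difference is that the paper works with the uncentered $\bm x$, using $\mathbb{E}_{\bm x}[\bm x \bm x^\top] = \bm \mu \bm \mu^\top + \bm \Sigma$ and then recompleting the square, while you center at $\bm \mu$ first so the cross term vanishes immediately (and, in your centered version, the symmetry of $\bm B$ is not even needed, since $\bm m^\top \bm B \bm A \bm \epsilon$ and $\bm \epsilon^\top \bm A^\top \bm B \bm m$ each have zero mean on their own).
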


\begin{IEEEproof}
We have the following chain of equality:
    \begin{align*}
    &\mathbb{E}_{\bm x} [ (\bm A \bm x - \bm y)^\top \bm B (\bm A \bm x - \bm y) ] \\
    &= \mathbb{E}_{\bm x} [\bm x^\top \bm A^\top \bm B \bm A \bm x - 2 \bm x^\top \bm A ^\top \bm B \bm y] + \bm y^\top \bm B \bm y \\
    &= \mathbb{E}_{\bm x} [\mathrm{Tr}\{ \bm A^\top \bm B \bm A \bm x \bm x^\top \}] - 2 \bm \mu^\top \bm A ^\top \bm B \bm y + \bm y^\top \bm B \bm y \\
    &\overset{(a)}{=} \mathrm{Tr}\{ \bm A^\top \bm B \bm A (\bm \mu \bm \mu^\top + \bm \Sigma) \} - 2 \bm \mu^\top \bm A ^\top \bm B \bm y + \bm y^\top \bm B \bm y \\
    &= \bm \mu^\top \bm A^\top \bm B \bm A \bm \mu + \mathrm{Tr}\{ \bm A^\top \bm B \bm A \bm \Sigma \} - 2 \bm \mu^\top \bm A ^\top \bm B \bm y + \bm y^\top \bm B \bm y \\
    &= (\bm A \bm \mu - \bm y)^\top \bm B (\bm A \bm \mu - \bm y) + \mathrm{Tr}\{ \bm A^\top \bm B \bm A \bm \Sigma \},
    \end{align*}
    where $(a)$ follows from $\bm \Sigma = \mathbb{E}_{\bm x} [\bm x \bm x^\top] - \bm \mu \bm \mu^\top.$
\end{IEEEproof}

%%%%%
\section{Proof of Proposition \ref{Prop_update_v}} \label{section_append_Prop_update_v}
%%%%%

We explain the calculation of
\begin{align}
    \frac{\partial f(\bm v)}{\partial v_{i,j}} & = \frac{\partial \ln p(\bm v' | \bm v)}{\partial v_{i,j}} \nonumber \\
    & \quad + \frac{\partial  \mathrm{VL}(p(\bm v, \bm z, T, \bm \theta, \bm \tau, \bm \pi), q ( \bm z, T, \bm \theta, \bm \tau, \bm \pi))}{\partial v_{i,j}}. \label{eq_derivative}
\end{align}

Regarding the first term on the right-hand side of \eqref{eq_derivative}, we have
\begin{align}
    \ln p(\bm v' | \bm v) = \sum_{i=1}^h \sum_{j=1}^w \ln \mathcal{N}(v'_{i,j}|v_{i,j},\sigma^2)
\end{align}
because we assume \eqref{eq_gaussian_noise}.
Therefore,
\begin{align}
    \frac{\partial \ln p(\bm v' | \bm v)}{\partial v_{i,j}} = \frac{v'_{i,j} - v_{i,j}}{\sigma^2}. \label{eq_derivative_degrade_process}
\end{align}

Next, we consider the second term on the right-hand side of \eqref{eq_derivative}. Hereafter, we denote terms independent of $v_{i,j}$ as $\mathrm{const.}$ and denote $\mathrm{VL}(p(\bm v, \bm z, T, \bm \theta, \bm \tau, \bm \pi), q( \bm z, T, \bm \theta, \bm \tau, \bm \pi))$ as $\mathrm{VL}$.

From the same argument as in Appendix \ref{section_append_Prop_update_z_T}, \ref{section_append_Prop_update_pi_theta_tau}, and \ref{section_append_calc_log_rho}, we have
\begin{align}
    & \mathrm{VL} = \sum_{s \in \mathcal{S}_\mathrm{max}} q(s \in \mathcal{L}_T)  \nonumber\\
    & \times \sum_{k=1}^K \pi'_{s,k} \biggl( \frac{h_s w_s}{2} \left( -\ln 2\pi + \psi (a'_k) - \ln b'_k \right) \nonumber \\
    &- \frac{a'_k}{2 b'_k} (\bm v_s - \bm V_s \bm \mu'_k)^\top (\bm v_s - \bm V_s \bm \mu'_k) -\frac{1}{2} \mathrm{Tr} \{ \bm V_s^\top \bm V_s (\bm \Lambda'_k)^{-1} \} \biggr) \nonumber \\
    &+ \mathrm{const.}
\end{align}
Thus, it holds that
\begin{align}
    & \frac{\partial \mathrm{VL}}{\partial v_{i,j}} =  \sum_{s \in \mathrm{path}(v_{i,j})} q(s \in \mathcal{L}_T) \nonumber \\
    & \quad \times \sum_{k=1}^K \pi'_{s,k} \biggl(- \frac{a'_k}{2 b'_k} \frac{\partial}{\partial v_{i,j}}(\bm v_s - \bm V_s \bm \mu'_k)^\top (\bm v_s - \bm V_s \bm \mu'_k) \nonumber \\
    & \qquad \qquad \qquad \qquad \qquad -\frac{1}{2} \frac{\partial}{\partial v_{i,j}} \mathrm{Tr} \{ \bm V_s^\top \bm V_s (\bm \Lambda'_k)^{-1} \} \biggr). \label{eq_derivative_VL}
\end{align}
Hence, we consider the calculation of
\begin{align}
    &\frac{\partial}{\partial v_{i,j}}(\bm v_s - \bm V_s \bm \mu'_k)^\top (\bm v_s - \bm V_s \bm \mu'_k), \label{eq_derivative_inner_product} \\
    &\frac{\partial}{\partial v_{i,j}} \mathrm{Tr} \{ \bm V_s^\top \bm V_s (\bm \Lambda'_k)^{-1} \}  \label{eq_derivative_trace}
\end{align}
for each $s \in \mathrm{path}(v_{i,j})$.

First, we expand \eqref{eq_derivative_inner_product} as
\begin{align}
     \frac{\partial}{\partial v_{i,j}} \bm v_s^\top \bm v_s - 2 \frac{\partial}{\partial v_{i,j}} \bm v_s^\top \bm V_s \bm \mu'_k + \frac{\partial}{\partial v_{i,j}} (\bm \mu'_k)^\top \bm V_s^\top \bm V_s \bm \mu'_k. \label{eq_derivative_inner_product_expansion}
\end{align}
Then, the first term of \eqref{eq_derivative_inner_product_expansion} is
\begin{align}
    \frac{\partial}{\partial v_{i,j}} \bm v_s^\top \bm v_s 
    = \frac{\partial}{\partial v_{i,j}} (\cdots + v_{i,j}^2 + \cdots ) 
    =2 v_{i,j}. \label{first_term}
\end{align}
Next, the second term of \eqref{eq_derivative_inner_product_expansion} is
\begin{align}
    \frac{\partial}{\partial v_{i,j}} \bm v_s^\top \bm V_s \bm \mu'_k 
    = \left(\bm v_{R(i,j)} + \bm v_{s, \tilde{R}(i,j)} \right)^\top \bm \mu'_k \label{second_term}
\end{align}
because the matrix $\bm V_s$ can be written as
\begin{align}
\bm V_s
&= 
\begin{pmatrix}
& \vdots \\
&\bm v_{R(i,j)}^\top \\
&\bm v_{R(i,j+1)}^\top \\
& \vdots \\
&\bm v_{R(i+1,j-1)}^\top \\
&\bm v_{R(i+1,j)}^\top \\
& \vdots \\
\end{pmatrix} \label{matrix_V_s} \\
&=
\begin{pmatrix}
& \vdots \\
&\bm v_{R(i,j)}^\top \\
v_{i,j} & v_{i-1, j+2} & v_{i-1, j+1} & 1 \\
& \vdots \\
v_{i+1,j-2} & v_{i,j} & v_{i, j-1} & 1 \\
v_{i+1, j-1} & v_{i, j+1} & v_{i,j} & 1 \\
& \vdots \\
\end{pmatrix}. \nonumber
\end{align}
Finally, the third term of \eqref{eq_derivative_inner_product_expansion} is
\begin{align}
    \frac{\partial}{\partial v_{i,j}} (\bm \mu'_k)^\top \bm V_s^\top \bm V_s \bm \mu'_k 
    &= 2 (\bm \mu'_k)^\top \bm V_{s,\tilde{R}(i,j)} \bm \mu'_k \label{third_term}
\end{align}
because 
\begin{align*}
   &(\bm \mu'_k)^\top \bm V_s^\top \bm V_s \bm \mu'_k \\
   &\quad \overset{(a)}{=} \sum_{(k,l) \in \tilde{R}(i,j)} (\bm \mu'_k)^\top \bm v_{R(k,l)} \bm v_{R(k,l)}^\top \bm \mu'_k + \mathrm{const.}\\
   &\quad = \sum_{(k,l) \in \tilde{R}(i,j)} \left( \bm v_{R(k,l)}^\top \bm \mu'_k \right)^2+ \mathrm{const.},
\end{align*}
where $(a)$ follows from 
\begin{align}
    \bm V_s^\top \bm V_s = \sum_{(k,l) \in \tilde{R}(i,j)} \bm v_{R(k,l)} \bm v_{R(k,l)}^\top + \mathrm{const.} \label{eq_product_V_s}
\end{align}
Thus, from \eqref{eq_derivative_inner_product_expansion}, \eqref{first_term}, \eqref{second_term}, and \eqref{third_term}, we have
\begin{align}
    & \frac{\partial}{\partial v_{i,j}}(\bm v_s - \bm V_s \bm \mu'_k)^\top (\bm v_s - \bm V_s \bm \mu'_k) \nonumber \\
    & \quad = 2 \left \{ v_{i,j} - \left(\bm v_{R(i,j)} + \bm v_{s,\tilde{R}(i,j)} \right)^\top \bm \mu'_k \right. \nonumber \\
    & \left. \qquad \qquad \qquad \qquad + (\bm \mu'_k)^\top \bm V_{s,\tilde{R}(i,j)} \bm \mu'_k \right \}. \label{eq_derivative_inner_product_result}
\end{align}

Next, from \eqref{eq_product_V_s} and the definition of $\bm V_{s,\tilde{R}(i,j)}$, we have
\begin{align}
    &\frac{\partial}{\partial v_{i,j}} \mathrm{Tr} \{ \bm V_s^\top \bm V_s (\bm \Lambda'_k)^{-1} \} \nonumber \\
    & \quad = \mathrm{Tr} \left \{ \left( \bm V_{s,\tilde{R}(i,j)} + \bm V_{s,\tilde{R}(i,j)}^\top \right)
    (\bm \Lambda'_k)^{-1}
    \right \} \nonumber \\
    &\quad = 2 \mathrm{Tr} \left \{ \bm V_{s,\tilde{R}(i,j)} (\bm \Lambda'_k)^{-1} \right \}. \label{eq_derivative_Trace}
\end{align}

In conclusion, by substituting \eqref{eq_expression_q}, \eqref{eq_derivative_inner_product_result}, and \eqref{eq_derivative_Trace} into \eqref{eq_derivative_VL}, and then substituting this result and \eqref{eq_derivative_degrade_process} into the right-hand side of \eqref{eq_derivative}, we obtain Proposition \ref{Prop_update_v}.
\end{document}